\newcommand{\cmark}{\ding{51}}%
\theoremstyle{definition}
\newtheorem{definition}{Definition}[section]
\newcommand\Range[1]{\mathrm{Range}(#1)}
\title{Can Public Large Language Models Help Private Cross-device \\ Federated Learning?}
\author{Boxin Wang$^3$\thanks{ \, Part of the work was done while Boxin Wang was an intern at Google. Correspondence to: Boxin Wang \href{mailto:boxinw2@illinois.edu }{\texttt{boxinw2@illinois.edu }} and Zheng Xu \href{mailto:xuzheng@google.com}{\texttt{xuzheng@google.com}}.}\,,\, 
Yibo Jacky Zhang$^4$, Yuan Cao$^2$, Bo Li$^3$, H. Brendan McMahan$^1$, \\
{\bf Sewoong Oh$^1$, Zheng Xu$^1$, Manzil Zaheer$^2$} \\
$^1$ Google Research, $^2$Google Deepmind, $^3$UIUC, $^4$Stanford}
\newtheorem{theorem}{Theorem}
\numberwithin{theorem}{section}
\numberwithin{lemma}{section}
\numberwithin{proposition}{section}
\numberwithin{conjecture}{section}
\def\eqref#1{equation~\ref{#1}}
\def\1{\bm{1}}
\DeclareMathAlphabet{\mathsfit}{\encodingdefault}{\sfdefault}{m}{sl}
\SetMathAlphabet{\mathsfit}{bold}{\encodingdefault}{\sfdefault}{bx}{n}
\def\gH{{\mathcal{H}}}
\def\gX{{\mathcal{X}}}
\def\sR{{\mathbb{R}}}
\newcommand{\E}{\mathbb{E}}
\begin{document}
\maketitle
\begin{abstract}
We study (differentially) private federated learning (FL) of language models. The language models in cross-device FL are relatively small, which can be trained with meaningful formal user-level differential privacy (DP) guarantees when massive parallelism in training is enabled by the participation of a moderate size of users. Recently, public data has been used to improve privacy-utility trade-offs for both large and small language models. 
In this work, we provide a systematic study of using large-scale public data and LLMs to help differentially private training of on-device FL models, and further improve the privacy-utility tradeoff by techniques of distillation.
Moreover, we propose a novel distribution matching algorithm with theoretical grounding to sample public data close to private data distribution, which significantly improves the sample efficiency of (pre-)training on public data.
The proposed method is efficient and effective for training private models by taking advantage of public data, especially for customized on-device architectures that do not have ready-to-use pre-trained models. 
\end{abstract}

\section{Introduction}

Federated Learning (FL) \citep{fl,mcmahan18learning,kairouz2019advances} is designed to collaboratively train a global model on decentralized data across user clients while protecting data privacy.
FL emerged as an effective privacy-preserving solution of training (language) models, as rich text data are generated by users, which may contain sensitive and personal information. After \citet{fl} proposed to train on-device recurrent neural networks, FL has been widely used in various natural language processing applications and products, including next-word prediction \citep{gboard}, keyword spotting \citep{hard2020training}, and out-of-vocabulary word discovery \citep{chen2019federated}.

To further protect user privacy, Differential Privacy (DP)~\citep{dp1,dp2,dp3,mcmahan18learning} is introduced to provide formal privacy guarantees of models trained by federated learning. 
DP for deep learning explicitly adds random noise with bounded sensitivity to a training process (\textit{e.g.}, DP-SGD \citep{abadi2016deep}), ensuring a quantifiable similarity in output model distributions when the training dataset changes. 
When combining DP with FL, a variant of DP-SGD called DP-FedAvg \citep{mcmahan18learning}) is applied to guarantee user-level DP \citep{userdp}.
Current research primarily focuses on applying user-level DP to small on-device models with fewer than 10 million parameters \citep{mcmahan18learning,dpftrl,gboard2}. 
The model size is limited due to challenges such as significant DP noise required to preserve privacy \citep{dplm2} and the communication costs in cross-device FL.

Recent advances in large language models (LLMs) \citep{lamda,gpt2,gpt3,bert,t5}  have revolutionized natural language processing (NLP) and achieved unprecedented performance on various tasks such as text generation, machine translation, and sentiment analysis. 
However, their success comes at a cost of requiring massive amounts of computational resources, making them difficult to deploy on resource-constrained devices such as smartphones, tablets, or other edge devices. 
Additionally, there are concerns regarding the user privacy in various aspects such as memorizing personal information in training, and exposing private query in inference.

Recent work explore incorporating public information to improve privacy-utility trade-off in applying DP for (large) LMs \citep{dplm,dplm2}. Public data \citep{mirrordescent} or other side information \citep{li2022private} are also studied for (DP) FL.
In non-DP FL settings, \citet{nguyen2022begin} studies the effect of initializing from a pre-trained model.  
However, it is an open question on \textit{how to leverage the power of pre-trained LLMs to facilitate private FL for on-device LMs}.

In this work, we answer the question through systematic study aimed at enhancing private federated learning for on-device LMs with public pre-trained LMs.
Specifically, Our approach involves leveraging both public data and pre-trained LLMs to improve differentially private federated learning for on-device models by techniques of public pre-training and distillation.
Additionally, we propose a novel distribution matching algorithm, which is backed by theoretical analysis, to sample public data closely resembling the private data distribution, which significantly increases sample efficiency in public training.
Moreover, our extensive empirical results align with our theoretical predictions, further substantiating our approach.
Our work complements existing research by utilizing LLMs to improve public training through knowledge distillation for private cross-device federated learning, and achieve a strong privacy-utility trade-off with substantial improvements on sampling efficiency for public data. 
Our method points to a novel direction of efficiently enhancing private FL with public pretraining data and LLMs.

We summarize our \textbf{contributions} as follows: 
\begin{itemize}[leftmargin=0em,topsep=0pt,itemsep=0pt]
\item We focus on improving private federated learning for language modeling tasks and explore ways to leverage public data and pre-trained LLMs for tokenizers, training protocols, and data (sub)sampling. 
\item We conduct comprehensive studies and compare the use of Sentence Piece tokenizers from public LLM and unigram tokenizers from private corpus. We find that adopting public tokenizers from LLMs can not only prevent the potential privacy leakage from the private tokenizer vocabulary, but also lead to better learning utility with DP guarantees. 
\item For training protocol, we propose to leverage public LLM to teach private on-device LMs by knowledge distillation. We demonstrate that distilling public LLM to pre-train on-device LM can lead to more than $7\%$ accuracy improvement with tight privacy bound ($\varepsilon=1.77$). Moreover, it can achieve high data efficiency of using only $1\%$ of the public data compared to that in public pre-training without LLM, and attain better accuracy.
\item We further propose a novel distribution matching method that leverages both private on-device LMs and public LLMs to select public records close to private data distribution. 
We show that using $0.08\%$ of carefully sampled public data to train on-device LM can lead to comparable performance as public pre-training on-device LMs with the whole pre-training corpus. Moreover, it reduces the public training time from more than one week to a few hours.
Our method is grounded in theoretical analysis, which is corroborated by our extensive empirical results.
\end{itemize}

\section{Differentially Private Federated Learning for On-device LMs}
\label{sec:setup}
In this section, we walk through the preliminaries of differentially private federated learning of language models 
following the cross-device federated learning literature \citep{mcmahan18learning,kairouz2019advances,dpftrl}. 
We also introduce the experimental setup used throughout this paper.

\noindent \textbf{Cross-device Federated Learning.} 
\citet{fl} introduce federated learning to collaboratively train LMs for next-word prediction from decentralized user data on a large number of mobile devices without directly sharing the private data. A common training algorithm of federated learning is \texttt{FedAvg} \citep{fl}, where each client downloads the current model from the centralized server, computes an update by performing local computation on their dataset (\textit{e.g.}, running SGD) and sends the update back to the server. The server aggregates the updates across clients to update the global model and send the updated model back to local clients to achieve the goal of collaborative learning without directly accessing the training data on each user’s mobile device. 

In our experiments, we follow previous work \citep{dpftrl,mirrordescent,wu2022motley} and sample 100 clients in each training round. Each client uses a batch size of 16 for local training. We set the training rounds $T=1600$ in total. 

\noindent \textbf{User-level Differential Privacy.}
To further protect user privacy, Differential Privacy (DP)~\citep{dp1,dp2,dp3} was introduced to provide a formal privacy guarantee for federated learning.

\begin{definition}[$(\varepsilon, \delta)$-Differential Privacy]
\label{def:dp}
A randomized algorithm $\mathcal{M}$ with domain $\mathbb{N}^{|\mathcal{X}|}$ is $(\varepsilon,\delta)$-differentially private if for all $\mathcal{S} \subseteq \Range{\mathcal{M}}$ 
and for any adjacent datasets $D$ and $D'$:
\begin{equation*}
    \Pr[\mathcal{M}(D) \in \mathcal{S}] \leq \exp(\varepsilon)\Pr[\mathcal{M}(D')\in \mathcal{S}] + \delta.
\end{equation*}
\end{definition}

\Cref{def:dp} provides a formal definition of $(\varepsilon, \delta)$-DP by bounding the change in output distribution caused by a small input difference (or, adjacent datasets) for a randomized algorithm.  
In the FL setting, it is preferable to bound the output distribution caused by different users in order to protect the privacy of each client's whole dataset. 
Specifically, adjacent datasets of $D$ and $D'$ for user-level differential privacy \citep{userdp} are defined as: $D$
can be obtained from $D'$ by adding or subtracting all the records of a single user/client, which determines the unit of privacy guarantees. 

In our experiments, we use DP-FTRL \citep{dpftrl} for privacy accounting and private federated training, which can achieve strong privacy guarantee in practical FL scenarios \citep{xu2023gboard}. 
We use $\delta=10^{-6}$ and consider two $\varepsilon$ bounds: a tight privacy bound with $\varepsilon=1.77$ by using a large noise multiplier $m=8.83$, and a slightly loose privacy bound with $\varepsilon=18.71$ and noise multiplier $m=1.13$. We present more hyperparameter tuning details in Appendix \S\ref{app:exp}.

\noindent \textbf{On-device LMs.} 
Due to the limited memory constraints of mobile devices, on-device LMs are relatively small (usually less than 10M parameters). 
In our work, we focus on two types of on-device auto-regressive LMs: LSTM \citep{lstm} and transformers \citep{transformer}.
More model details can be found in Appendix \S\ref{app:pretrain}.

\noindent \textbf{Pre-trained LLMs.}
In addition to the on-device LMs trained on private datasets, this work also assumes that we have access to LLMs pre-trained on a large public corpus to aid private learning.
Specifically, we use LaMDA \citep{lamda} 2B throughout this work as an example, and conduct a systematic study of leveraging LLMs to help private training of on-device LMs.

\noindent \textbf{Datasets.} 
We focus on next word prediction task on the StackOverflow benchmark dataset (\citeyear{stackoverflow}) for private federated learning. 
Since StackOverflow is naturally keyed by users, each client in FL is a user in the Stack Overflow online forum. 
The examples of a client are sentences of questions and answers posted by a specific user.
We follow \citep{reddi2021adaptive,dpftrl} to construct a validation set of 10K samples, and a test set of 16.5M samples. 
Our evaluation metric is in-vocabulary next word (token) prediction accuracy, which is computed as the ratio of accurately predicted in-vocabulary words to the total number of words in the sequence (excluding OOV tokens). 

In addition to  StackOverflow as the (private) dataset, we  use {the realnews variant \texttt{c4/realnewslike}} of C4 dataset \citep{t5}, as the public dataset.
We analyzed the sources of the public C4 dataset and the Stackoverflow dataset for private training, and verified that there is no explicit overlap between public C4 dataset and the private StackOverflow dataset. More details can be found in Appendix \S\ref{app:verify}.

\section{Inspiration from LLMs}

The success of publicly pre-trained LLMs motivate us to have retrospective views on further improving private on-device LMs.
In this section, we explore inpiration from LLMs: the use of subword tokenizers and a large public corpus for pre-training.
We apply them to on-device LMs, and observe that both techniques bring significant performance improvement for private FL.

\begin{figure}[t]\small
    \centering
    \includegraphics[width=\linewidth]{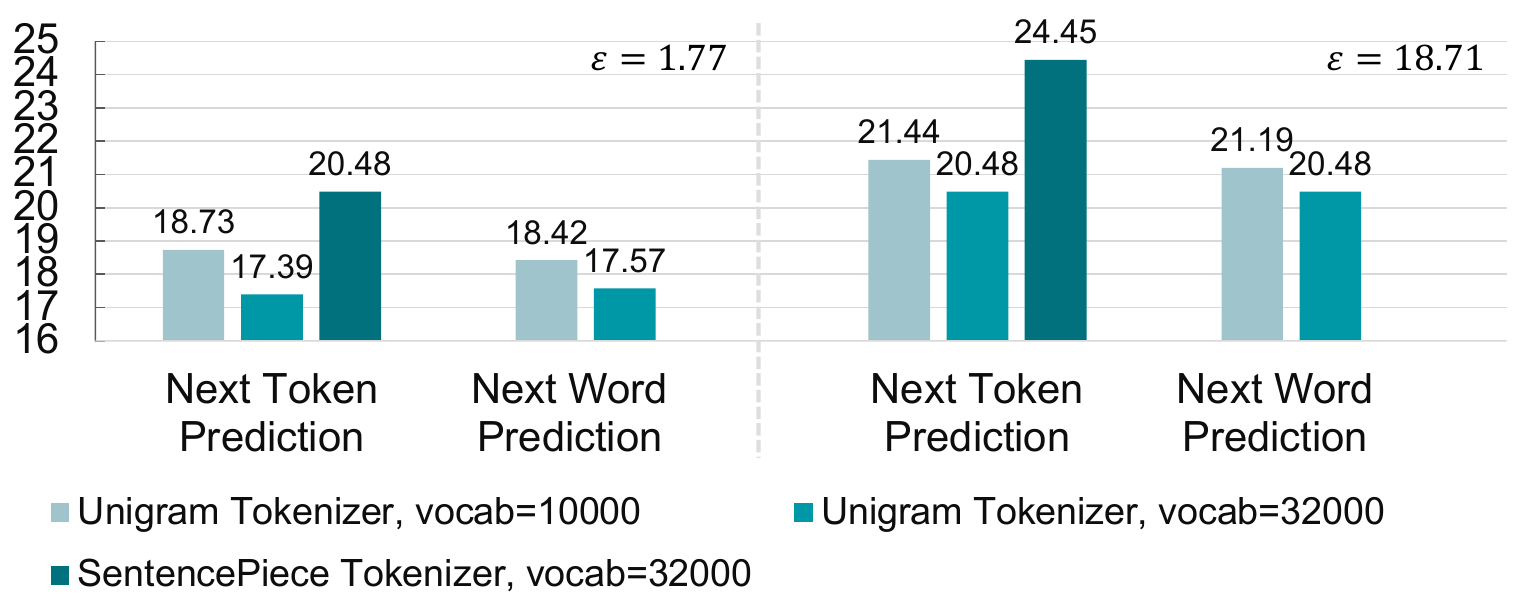}
      \caption{\small Next word (token) prediction accuracy for on-device LSTM with different tokenizers in the private FL.
      }
    \label{fig:tokenizer}
\end{figure}

\subsection{Using Public Tokenizer from LLMs}
\label{sec:tokenizer}

Tokenizer is an important module of LMs, which transforms natural languages into a sequence of predefined symbol sets (vocabulary).
Prior work in the literature of private FL of LMs \citep{mcmahan18learning,dpftrl,mirrordescent} use word-level unigram tokenizers potentially directly built from user data, which may need additional privacy budget \citep{ponomareva2022training,bagdasaryan2022training}. 

Recent LLMs adopt sub-word tokenizers \citep{kudo2018sentencepiece,bpe,wordpiece}, which mitigate most out-of-vocabulary (OOV) problems and yield state-of-the-art performance across different downstream tasks. 
This motivate us to replace the prior word-level unigram tokenizers with public sub-word tokenizers.
Specifically, we use SentencePiece tokenizer \citep{kudo2018sentencepiece} from  LaMDA.

To conduct comparison between unigram tokenizers and subword tokenizers for next word (token) prediction task, we convert the next word prediction accuracy into next token prediction accuracy. 
This conversion is achieved through splitting each word using the SentencePiece tokenizer. 
We consider all tokens within a word as accurate if the predicted word is correct.
We compare standard SentencePiece models (vocabulary size = $32K$) with unigram tokenizers that selects the top-$k$ frequent words from user data with $k=10K$ or $32K$ as vocabulary.

We present the  private FL accuracy on the StackOverflow dataset  in Figure \ref{fig:tokenizer}. 
For the unigram tokenizer, using a larger vocabulary size in the DP setting can result in a slight performance drop, which can be different from the observation in non-DP settings~\citep{charles2022federated,xu2022adaptive}.
It is possible that the parameter increase of the embedding layer enlarges the effect of DP noise and hurts the final accuracy.
However, for next token prediction accuracy, 
although the public SentencePiece tokenizer from LaMDA also consists of $32K$ tokens,
it can significantly improve the private FL accuracy upon the unigram tokenizers, especially with smaller DP noise and $\varepsilon=18.71$.
We also observe that SentencePiece tokenizer finds no OOV tokens in the StackOverflow dataset, thus yielding the same high prediction accuracy with or without the OOV token. Therefore, we use SentencePiece tokenizer in the rest of this paper.

\subsection{Publicly pre-training for On-device LMs}
\label{sec:pretrain}

In addition to the use of subword tokenizers, LLMs benefit from pre-training on a large public corpus~\citep{li2022private,dplm}. In this section, we explore pre-training on-device LMs on public corpus to improve private federated learning.

\noindent \textbf{Pre-training Details.}  
We use the standard autoregressive language modeling loss $\mathcal{L}_{LM}$ to pre-train on-device LMs on the public C4 dataset, which takes around $1,400K$ steps (over a week of single GPU time) to process the entire dataset with the batch size of 512. 
We then use the publicly pre-trained checkpoint as the start point for private federated learning.
We leave more details in  \S\ref{app:pretrain}.

\begin{table}[t] \small
\centering
\begin{tabular}{l|cc|cc}
\toprule
                   & \multicolumn{2}{c|}{w/o pre-training} & \multicolumn{2}{c}{w/ pre-training} \\
                   \midrule
Rounds             & 0                    & 1600                & 0                   & 1600      \\
\midrule
$\varepsilon=1.77$          & \multirow{2}{*}{0.00}               & 20.48                          &       \multirow{2}{*}{16.94}               & 27.27              \\
$\varepsilon=18.71$ &                 & 24.45               &                  & 30.13\\
\bottomrule
\end{tabular}
\caption{\small Next Token Prediction Accuracy on the private StackOverflow dev set with or without public pre-training.}
\label{tab:pretrain}
\end{table}

\noindent \textbf{Results.} 
We present the next token prediction accuracy on the private StackOverflow dev set in Table \ref{tab:pretrain}. 
We observe that the accuracy on the private dataset significantly improves after pre-training for different different privacy budgets, shedding light on an effective way to boost private FL performance.
We also observe that after pre-training, it gives reasonable zero-shot accuracy on the private dataset even without private training (round=0). 

\begin{figure*}[tbh]\small
    \centering
\begin{subfigure}{.33\textwidth}
  \centering
    \includegraphics[width=\linewidth]{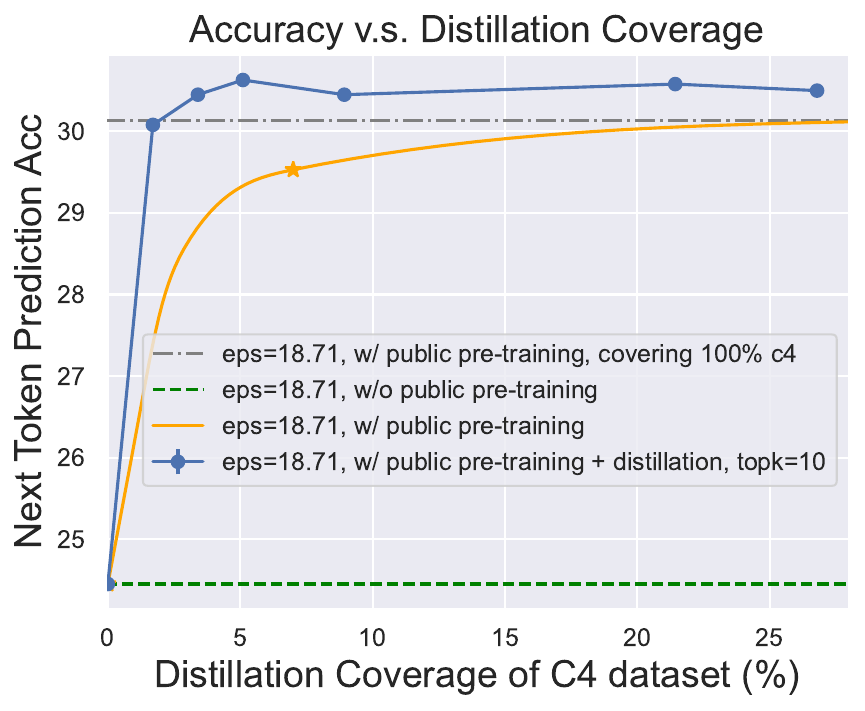}
  \caption{\footnotesize Acc. v.s. distillation steps ($\varepsilon=18.71$)}
  \label{fig:efficiency_small}
\end{subfigure}
\begin{subfigure}{.33\textwidth}
  \centering
    \includegraphics[width=\linewidth]{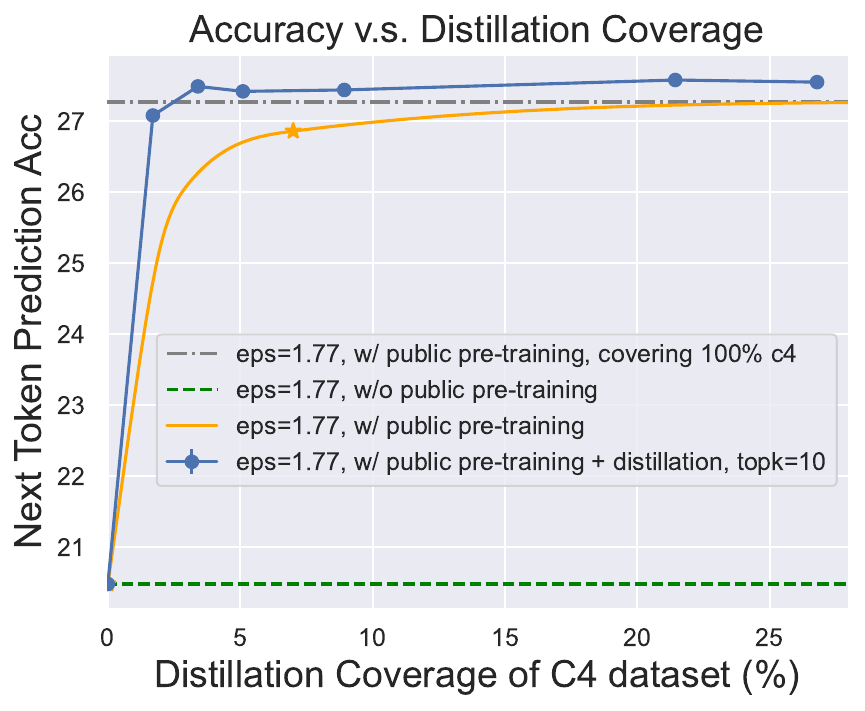}
  \caption{\small Acc. v.s. distillation steps  ($\varepsilon=1.77$)}
  \label{fig:efficiency_large}
\end{subfigure}
\begin{subfigure}{.33\textwidth}
  \centering
    \includegraphics[width=1.02\linewidth]{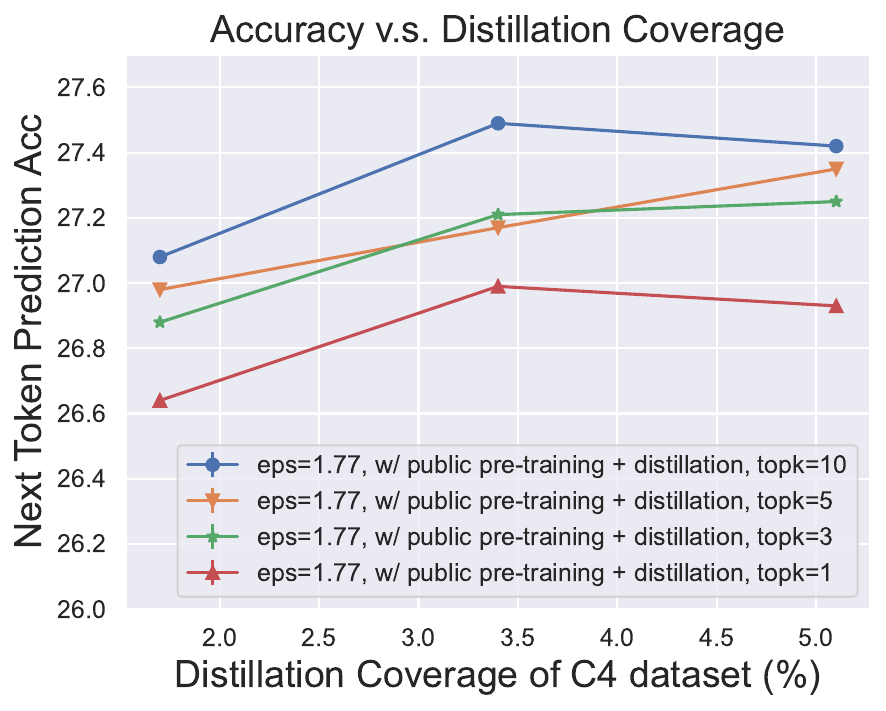}
  \caption{\small Acc. v.s. top-$k$ logits  ($\varepsilon=1.77$)}
  \label{fig:topk_small}
\end{subfigure}
    \caption{\small Ablation studies on how distillation steps and top-$k$ logits in distillation impact next token prediction accuracy (Acc.) of on-device LSTM models on the dev set of the private StackOverflow dataset.}
    \label{fig:efficiency}
\end{figure*}

\section{Distillation from Public LLM}
\label{sec:distill}

On one hand, the cost of public pre-training for on-device LMs is still expensive on a large public corpus (around a week of GPU time).
On the other hand, existing LLMs are well pre-trained and demonstrate promising performance across a variety of downstream tasks.
This motivates us to explore on whether we can leverage existing LLMs to improve the sample efficiency of pre-training on-device LMs. 
In this section, we answer the question above with systematic studies and show that we can improve the sample efficiency by using only $1\%$ of pre-training data and distillation from LLMs, achieving similar or even better performance than using $100\%$ of pretrianing data without distillation.

\subsection{Distillation Design}

Inspired by the literature of model compression \citep{sun2020mobilebert,jiao2019tinybert}, we use knowledge distillation to transfer the knowledge from trained LLMs into on-device LMs during pre-training.
The distillation pipeline contains the following two steps:
 
\noindent \textbf{Building a distillation corpus.} Given an input sequence from the public pre-training corpus, the LLM outputs the probability distribution over the vocabulary for next token prediction at each decoding step.
To construct a distillation corpus, we save the top-$k$ logits with $k$ nonzero entries $\boldsymbol{z}_T$ from the teacher LLM as a silver-label dataset.
In this way, the distillation corpus is model-agnostic, and thus can be applied to different variants of on-device LMs for pre-training. 
Moreover, selecting a reasonable top-$k$ for the logits can both help compress the distillation corpus to a moderate size and filter out noisy signals from tokens with low output probabilities.

\noindent \textbf{Public pre-training with distillation loss.}
Since we align the tokenizer of the on-device LM with the LLM to share the same vocabulary, we can align the output distribution of on-device LMs and LLMs by the cross-entropy loss.
Formally, for next token prediction task, given the output logits from student on-device LMs $\boldsymbol{z}_S$, the gold label from the pre-training corpus $\boldsymbol{y}$, and the logits from the distillation corpus of LLMs  $\boldsymbol{z}_T$, we add an additional knowledge distillation loss $\mathcal{L}_{KD} = \text{CE}(\boldsymbol{z}_S/t, \boldsymbol{z}_T/t)$ to the pre-training language modeling loss $\mathcal{L}_{LM} = \text{CE}(\boldsymbol{z}_S, \boldsymbol{y})$ as our public pre-training loss $\mathcal{L}_{\text{pub}} = \mathcal{L}_{LM} + \beta \mathcal{L}_{KD}$ 
where $t$ is the temperature. 
More distillation details are in \S\ref{app:distill}.

\subsection{Experimental Results}

After public pre-training with knowledge distillation, We use the checkpoints at different pre-training steps as the start point for private federated learning.
Our main results can be found in Table \ref{tab:dist_match}. We show that by using $1\%$ C4 dataset for pre-training with knowlegde distillation, we can significantly improve the sample efficiency without hurting but even improving the private FL accuracy for both LSTM and transformers, when compared with public pre-training on the whole C4 dataset. 
The sample efficiency improvement thus reduces the pre-training cost from one week to around one day, shedding light on a promising direction to improve the efficiency and utility of private FL.

\noindent \textbf{Ablation studies on distillation steps.}
To understand whether distillation for more epochs can help with private FL, we conduct a set of ablation studies on distillation steps given different privacy budgets as shown in Figure \ref{fig:efficiency_large} and \ref{fig:efficiency_small}. 
Specifically, we use the checkpoints at different distillation steps to initialize on-device LSTM and report the next word prediction accuracy after private FL at round 1600. 
We observe a consistent performance improvement when the distillation covers less than $5\%$ of the C4 dataset. 
But when we pre-train the LM for more epochs, the improvement becomes marginal. 
This suggests that teaching on-device LMs via LLMs can converge quickly within a few iterations. 

\noindent \textbf{Abaltion studies on top-$k$ logits.}
We take the top-$k$ logits of the LLM to construct our distillation datasets and pre-train the on-device LMs.
Here, we conduct an ablation study by pre-training different on-device LMs with different $k$ and evaluate how top-$k$ logits in distillation can impact the accuracy of private FL.
We present our empirical results in Figure \ref{fig:topk_small} and Appendix Figure \ref{fig:topk_large}.
We observe that pre-training with a larger $k$ is more helpful to achieve better downstream accuracy on private data. 
To have a reasonable trade-off between dataset size and pre-training performance, we use top-$k=10$ in all the following experiments.

\section{Distribution Matching}

In the previous section, we achieve compelling performance by employing LLM distillation using only $1\%$ of the \textit{randomly sampled} pre-training corpus.
Now we further investigate the possibility of improving sample efficiency by selectively identifying public samples that align with the distribution of private samples.
To this end, we propose a novel distribution matching method to sample public records for pre-training with a novel theoretical analysis jointly considering public-private distribution shift and DP mechanism.
We demonstrate that by carefully selected $0.08\%$ of public samples, we can pre-train on-device LMs that perform as well as using $1\%$ of public samples with distillation.
This approach significantly improves sample efficiency, providing an additional knob of using public pre-training for private on-device models.

\subsection{Algorithm}

We hypothesize two principles to sample public records to match the private distribution: 
$(i)$ the probability of the public sample $x$ on the private data distribution $p_{\text{priv}}(x)$ is high, which can be approximated by the prediction of the on-device LMs trained on the private dataset;
$(ii)$ the probability of a public sample $x$ on the public data distribution $p_{\text{pub}}(x)$ is also high, as we expect those samples are easy-to-learn \citep{swayamdipta2020dataset} and of high data quality in the public corpus.
The probability $p_{\text{pub}}(x)$ can be approximated by the public pre-trained LLMs.

To verify our hypothesis, we visualize the perplexity (PPL) distribution of public samples and private samples evaluated by both a privately fine-tuned on-device LM and a public pre-trained LLM in Figure \ref{fig:dist_matching}.
To have an ``oracle'' on-device LM that well captures the private data distribution, we fine-tune it on the private data without DP noise to overfit the private data distribution.
We randomly sample 10k records from the public dataset and private dataset, respectively.
We observe that the \textcolor{blue}{private dataset}  mostly concentrates on the regime with low PPL evaluated by the public and private LMs, whereas the \textcolor{orange}{public dataset} is more diverse and distributed across a broader range of PPL values. 
The distribution visualization confirms our hypothesis to select \textcolor{orange}{public samples} from the lower left corner, which correspond to samples with high probabilities  $p_{\text{pub}}(x)$ and  $p_{\text{priv}}(x)$ on public and private data distribution (\textit{i.e.}, low perplexity evaluated by public and pirvate LMs).

\begin{figure}[t]
    \centering
    \includegraphics[width=\linewidth]{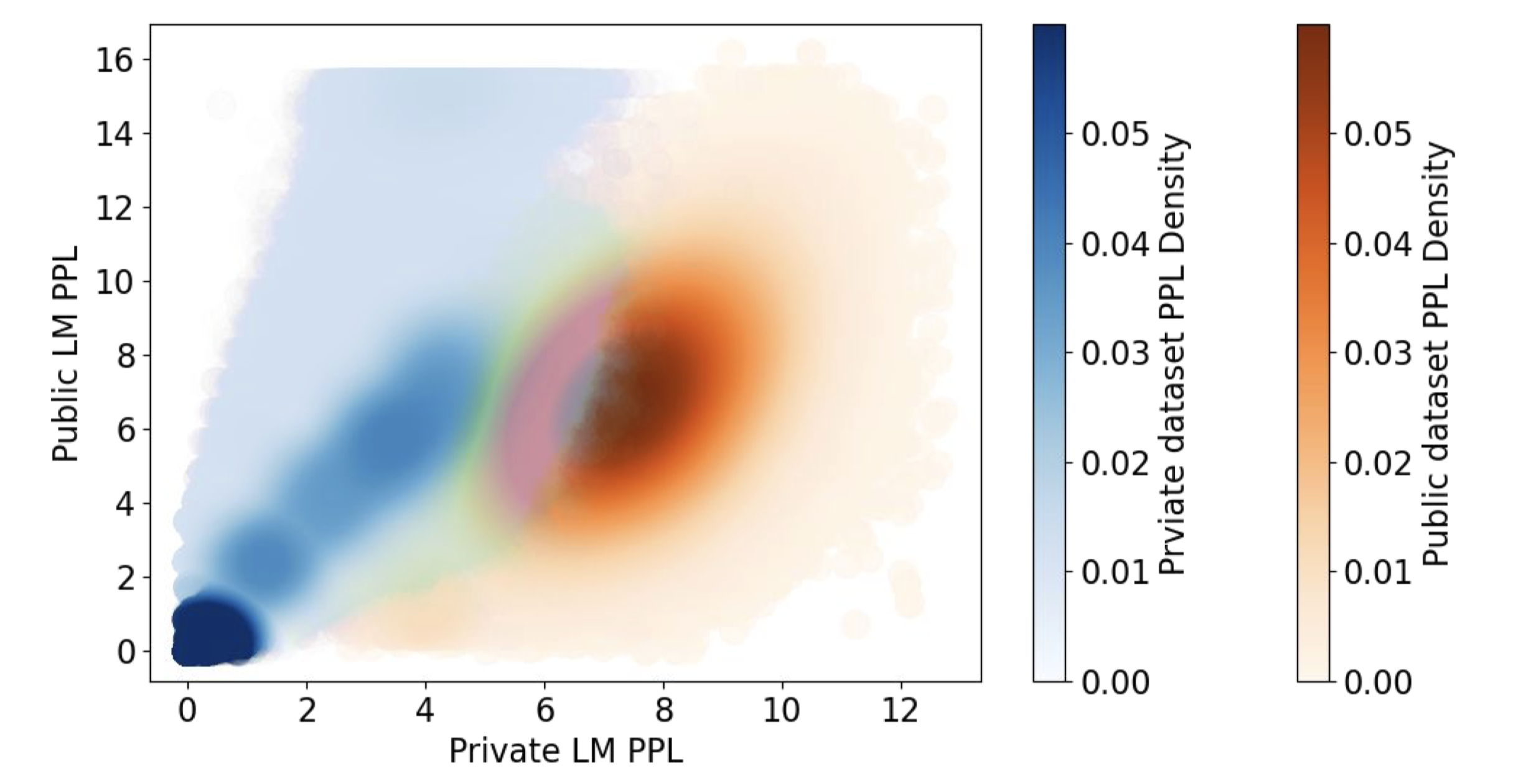}
    \caption{\small Visualization of PPL distribution of the private and public datasets evaluated by the private on-device LM and the public LLM. The private dataset exhibits a concentration of low PPL values, whereas the public corpus is dispersed across a broader range of PPL values, with a higher average PPL.}
    \label{fig:dist_matching}
\end{figure}

In practice, we do not have an ``oracle'' on-device LM trained on private data for distribution match. 
Instead, we propose to fine-tune an on-device LM with DP for certain rounds $T' < T$ before consuming all the privacy budgets, and then use the checkpoint at round $T'$ with DP guarantee to approximate $p_\text{priv}(x)$ and perform distribution matching to sample public records. This post-processing based on a DP checkpoint will not incur any additional privacy cost.
Thereafter, we can use the sampled \textit{public} records to further train the private checkpoint at round $T'$, as a way for efficient public (pre-)training. 
Following the strategy in \S\ref{sec:distill}, we also employ the distillation loss to better train the on-device LM with carefully sampled public records to further enhance the sample efficiency. 
Lastly, we use the remaining privacy budgets to fine-tune the on-device LM until reaching round $T$, and evaluate its next token prediction accuracy at the dev and test sets. 
We term the paradigm of two-stage private learning combined with public training as ``public mid-training''.
This approach differs from ``public pre-training'', which involves public pre-training prior to private FL.
We present the distribution matching protocol in Algorithm \ref{algo}.

\begin{algorithm}[t]
\begin{small}
  \caption{\small Leveraging LLMs for distribution matching and public training in private federated learning. } \label{algo}
  \begin{flushleft} 
  \textbf{Input:} Public pre-training corpus $D$, private corpus $D^*$, sampling rate $q$, private fine-tuning rounds $T$, first-stage fine-tuning rounds $T' < T$ for distribution matching, a public pre-trained LLM \\
  \textbf{Output:} Private on-device LM with DP guarantee
  \end{flushleft} 
  \begin{algorithmic}[1]
  \State Randomly initialize an on-device LM;
  \State \emph{// \textcircled{1} First-stage private federated learning}
  \State Use DP-FTRL to train the on-device LM for rounds $T'$; 
  \For{each $x \in D$}
    \State \emph{// \textcircled{2} Probability evaluation}
    \State Compute the average (token) log prob $\log p_{\text{priv}}(x)$ given the privately ﬁne-tuned LM at round $T'$;
    \State Compute the average (token) log prob $\log p_{\text{pub}}(x)$ given a publicly pre-trained LLM ;
  \EndFor 
  \State \emph{// \textcircled{3} Distribtion matching}
  \State Sort $D$ based on $\log p_{\text{priv}}(x)$ + $\log p_{\text{pub}}(x)$
  \State Sample a subset of $D$ as $D'$ with top $\log p_{\text{priv}}(x)$ + $\log p_{\text{pub}}(x)$ values, such that $|D'| = q |D|$.
  \State \emph{// \textcircled{4} Public mid-training with LLM distillation}
  \State Train the on-device LM with the loss $\mathcal{L}_{\text{pub}}$ on $D'$
  \State \emph{// \textcircled{5} Second-stage private federated learning}
  \State Use DP-FTRL to train the on-device LM for the remaining rounds of $T-T'$
  \State \textbf{return} On-device LM with DP guarantee
  \end{algorithmic}
  \end{small}
\end{algorithm}

\begin{table*}[t]\small
    \resizebox{1.0\textwidth}{!}{
    \begin{tabular}{@{}l|l|c|c|cc|cc@{}}
    \toprule
    & \multirow{2}{*}{$q$ ($\%$ of} & \multirow{2}{*}{LLM} & \multirow{2}{*}{Distribution} & \multicolumn{2}{c|}{Accuracy (LSTM)} & \multicolumn{2}{c}{Accuracy (Transformer)}\\ \cmidrule{5-8}
                                                             &   Public Data)   &   Distillation   &   Matching   &   $\varepsilon$=1.77   &  $\varepsilon$=18.71        & $\varepsilon$=1.77     &  $\varepsilon$=18.71    \\ \midrule
    \textbf{No Public Training}                              &   $0\%$         &                  &           &    $20.68_{\pm 0.04}$             &     $28.87_{\pm 0.04}$                 &    $23.98_{\pm 0.15}$             &     $28.29_{\pm 0.06}$             \\ \midrule
    \textbf{Pre-training  w/ public data ($T'=0$)}                    &   $100\%$       &                  &           &    $28.01_{\pm 0.26}$             &     $30.70_{\pm 0.01}$                 &    $\textbf{28.05}_{\pm 0.02}$             &     $30.10_{\pm 0.00}$             \\
    $\quad$ \textbf{$\cdot$ LLM Distillation (100k steps)}    &   $1\%$         &     \cmark       &           &    $\textbf{28.68}_{\pm 0.09}$             &     $\textbf{31.13}_{\pm 0.03}$                 &    $27.75_{\pm 0.06}$             &     $\textbf{30.19}_{\pm 0.01}$             \\ 
    $\quad$ \textbf{$\cdot$ LLM Distillation (8k steps)}      &   $0.08\%$      &     \cmark       &           &    $26.18_{\pm 0.04}$             &     $29.53_{\pm 0.10}$                 &    $25.31_{\pm 0.08}$             &     $29.36_{\pm 0.12}$             \\ \midrule
    \textbf{Mid-training w/ public data ($T'=T/2$)}                     &   $0.08\%$      &                  &           &    $26.67_{\pm 0.06}$             &     $29.76_{\pm 0.03}$                 &    $25.83_{\pm 0.03}$             &     $29.15_{\pm 0.01}$             \\
    $\quad$ \textbf{$\cdot$ LLM Distillation (8k steps)}      &   $0.08\%$      &     \cmark       &           &    $27.01_{\pm 0.03}$             &     $30.18_{\pm 0.06}$                 &    $26.04_{\pm 0.12}$             &     $29.47_{\pm 0.05}$             \\
    $\quad$ $\quad$ \textbf{+ Distribution Matching}                    &   $0.08\%$      &     \cmark       &  \cmark   &    $\textbf{28.01}_{\pm 0.08}$             &     $\textbf{30.63}_{\pm 0.02}$                 &    $\textbf{27.17}_{\pm 0.03}$             &     $\textbf{29.83}_{\pm 0.01}$            \\
    \bottomrule
    \end{tabular}
    }
    \caption{\small Summary of techniques to improve downstream stream next token \textbf{prediction accuracy} and \textbf{sample efficiency} for on-device LSTM and transformer model evaluated on the StackOverflow test set.}
    \label{tab:dist_match}
\end{table*}

\subsection{Theoretical Analysis}
In this section, we provide the theoretical analysis of our distribution matching protocol to present the \textit{intuition} behind our selection hypothesis. 
In essence, the goal of our distribution matching algorithm is to have a good estimator for the private distribution.
However, characterizing the distribution shift in the context of differential privacy is a challenging problem, in that the private models are trained with DP noise, which can yield an inaccurate estimation of private data distribution, and thus add the complexity to our analysis.

\noindent\textbf{Problem Setup.} 
Define the text data domain as $\gX$. Denote $\ell_{\text{pub}}:\gX\to \sR$ as the log-density function of the public data distribution (i.e., $\ell_{\text{pub}}(x)=\log p_{\text{pub}}(x)$ where $p_{\text{pub}}(x)$ is the public data density estimated by public LLMs), and $\ell_{\text{priv}}$ as the \textit{accurate} log-density function of the private data distribution (i.e., $\ell_{\text{priv}}(x)=\log p_{\text{true priv}}(x)$ where $p_{\text{true priv}}(x)$ is the true private data density). However, due to limited private data sampled from the true private data distribution and DP noise injected in the private FL, we can only obtain an inaccurate estimation $\hat \ell_{\text{priv}}=\log p_{\text{priv}}(x)$ of the true private log-density $\ell_{\text{priv}}$, where $p_{\text{priv}}(x)$ is the private data density estimated by private on-device LMs. Note that we use the hat notation $\hat \ell_{\text{priv}}$ to denote that it is an estimation of the true private log-density $\ell_{\text{priv}}$.

We can view the estimation $\hat \ell_{\text{priv}}$ is a random variable where the randomness comes from: (i) that the private dataset we have is sampled from the private data distribution; and (ii) the randomness in the algorithm of obtaining $\hat \ell_{\text{priv}}$ based on the private dataset, e.g., differential privacy.
Following previous work~\citep{jiang2023federated}, we make a standard assumption. 
We assume the estimated private data log-density function is an unbiased estimator, i.e., $\E[\hat \ell_{\text{priv}}]=\ell_{\text{priv}}$.
Since $\ell_{\text{pub}}$ may not be ideal because of public-private domain shift, and $\hat\ell_{\text{priv}}$ may mot be ideal because of its DP noise, $\ell_{\text{pub}}$ and $\hat \ell_{\text{priv}}$ are neither good estimators for $\ell_{\text{priv}}$. 
\textit{Can we leverage both of the information and form a function $\hat h:\gX\to \sR$ that combines $\ell_{\text{pub}}$ and $\hat\ell_{\text{priv}}$ such that $\hat h$ is a good estimator for $\ell_{\text{priv}}$?}
In the following analysis, we choose $\hat h=\frac{1}{2}\ell_{\text{pub}}+\frac{1}{2}\hat\ell_{\text{priv}}$ and analyze  when and why it can be a better estimator to the true private log-density $\ell_{\text{priv}}$ than $\ell_{\text{pub}}$ and $\hat\ell_{\text{priv}}$.

We need some mathematical tools to define what does it mean to be ``better''. Concretely, we need a metric to measure the distance between functions. This can be done by having an inner product $\langle \cdot, \cdot \rangle$ in the function space of $\gH=\{f:\gX\to \sR\}$, and hence the norm in the function space $\gH$ is  $\|f\|=\sqrt{\langle f,f\rangle}$ for $\forall f\in \gH$. Our analysis holds with \textit{any} choice of the inner product as long as it does not make the log-densities norm infinite. We discuss a concrete choice of the inner product and its relation to the KL divergence in Appendix~\S\ref{app:theory}.

With the norm as a ``ruler'', we are able to define the following key quantities that formally characterize the setting.
\begin{enumerate}[leftmargin=1.3em,topsep=1pt,noitemsep]
    \item \textbf{Public-Private Domain Distance.} Let $d_{\text{pub, priv}}=\|\ell_{\text{pub}}-\ell_{\text{priv}}\|$ denote the distance between the public data log-density $\ell_{\text{pub}}$ and the true private log-density $\ell_{\text{priv}}$. 
    \item \textbf{Private Domain Randomness.} Let $\sigma_{\text{priv}}^2=\E[\|\hat\ell_{\text{priv}}-\ell_{\text{priv}}\|^2]$ denote the randomness of the estimated private log-density, i.e., the quality of the estimated private log-density $\hat \ell_{\text{priv}}$
\end{enumerate} 
The above definitions are important because the quality of a private log-density estimator would depend on the public-private domain shift and the private domain randomness as we show next. 

\begin{theorem}
\label{thm:main}
Let $\epsilon(\hat f)=\E[\|\hat f-\ell_{\text{priv}}\|^2]$ characterise how good $\hat f$ is as an estimator of the true private data log-density $\ell_{\text{priv}}$ for any random function $\hat f\in \gH$.
Consider the following three quantities: 
\begin{enumerate}[leftmargin=1.3em,topsep=1pt,noitemsep]
    \item $\epsilon(\ell_{\text{pub}})$ characterizing the error of the public log-density function $\ell_{\text{pub}}$ to approximate  $\ell_{\text{priv}}$
    \item $\epsilon(\hat\ell_{\text{priv}})$ depicting the error of the noisy private log-density function $\hat\ell_{\text{priv}}$ to approximate  $\ell_{\text{priv}}$
    \item $\epsilon(\hat h)$ characterizing the error of $\hat h=\frac{1}{2}\ell_{\text{pub}}+\frac{1}{2}\hat\ell_{\text{priv}}$ to approximate  $\ell_{\text{priv}}$.
\end{enumerate} 
Then,
\begin{align}
    \epsilon(\ell_{\text{pub}})&=d^2_{\text{pub, priv}}\\
    \epsilon(\hat\ell_{\text{priv}})&=\sigma_{\text{priv}}^2\\
    \epsilon(\hat h)&=\frac{1}{4}d^2_{\text{pub, priv}}+\frac{1}{4}\sigma_{\text{priv}}^2
\end{align}
\end{theorem}

\noindent\textbf{Interpretation $\quad$}
\Cref{thm:main} implies that:
\begin{itemize}[leftmargin=0.5em,topsep=0pt,noitemsep]
    \item $\epsilon(\hat h)\leq  \frac{1}{2} \max\{\epsilon(\ell_{\text{pub}}), \epsilon(\hat\ell_{\text{priv}})\}$. 
    \item $\epsilon(\hat h)\leq \min\{\epsilon(\ell_{\text{pub}}), \epsilon(\hat\ell_{\text{priv}})\} $ if $\tfrac{1}{3}\leq \frac{d^2_{\text{pub, priv}}}{\sigma_{\text{priv}}^2} \leq 3$. 
\end{itemize}
Combining the above, we have the following conclusion: recall $\hat h=\frac{1}{2}\ell_{\text{pub}}+\frac{1}{2}\hat\ell_{\text{priv}}=\tfrac{1}{2}\log (p_{\text{\text{pub}}}(x) p_{\text{\text{priv}}}(x))$. We can expect that $\hat h$ is better than either $\ell_{\text{pub}}$ or $\hat\ell_{\text{priv}}$ for any settings. Moreover, we can expect $\hat h$ to be better than  both $\ell_{\text{pub}}$ and $\hat\ell_{\text{priv}}$ if (i) there is a domain shift between the public-private domain; and (ii) our estimated private log-density $\hat \ell_{\text{priv}} $ is noisy in an extent comparable to the domain shift. 
We leave the full proof and additional discussion in Appendix \ref{app:theory}.

\begin{table}[t]\small
    \resizebox{1.0\linewidth}{!}{
        \begin{tabular}{l|ll|ll}
            \toprule
& \multicolumn{2}{c|}{LSTM} & \multicolumn{2}{c}{Transformer}\\ 
      &   $\varepsilon$=1.77             &  $\varepsilon$=18.71                   & $\varepsilon$=1.77                &     $\varepsilon$=18.71    \\ \midrule
w/ $p_\text{pub}(x)$      &   $\textbf{28.01}_{\pm 0.08}$             &     $\textbf{30.63}_{\pm 0.02}$                 &    $\textbf{27.17}_{\pm 0.03}$             &     $29.83_{\pm 0.01}$            \\
w/o $p_\text{pub}(x)$     &   $27.77_{\pm 0.05}$             &     $30.56_{\pm 0.06}$                 &    $26.70_{\pm 0.04}$             &     $\textbf{30.18}_{\pm 0.05}$            \\
\bottomrule
\end{tabular}
}
\caption{\small Ablation studies on the use of public LLM for distribution matching evaluated on the StackOverflow test set.}
\label{tab:p_pub}
\end{table}

\subsection{Experimental Results}

\noindent \textbf{Experimental Setup.} We set $T'= T/2 = 800$ rounds for the first-stage private federated learning.
We use $q=0.08\%$ of the whole pre-training corpus for public training, which reduces the public training time from more than 1 weeks to a few hours with a single GPU. 
For the public mid-training setting, we also evaluate how LLM distillation and distribution matching can impact the private FL accuracy, respectively. 
We run all the experimental settings for three times and report
the average and standard deviation of test accuracy
on the private StackOverflow dataset.

We present the results of on-device LSTM and transformers in Table \ref{tab:dist_match}. 
In the pre-training setting $(T'=0)$, we show that we cannot further improve the sample efficiency from $1\%$ to $0.08\%$  with LLM distillation improves the sample efficiency, as the final accuracy after private FL significantly decreases. 
In comparison, in the mid-training setting ($T'=T/2$), using LLM distillation on the $0.08\%$ of randomly sampled pre-training corpus already gives better performance than pre-training.
Moreover, with distribution matching to carefully sample public data, we further improve the private FL accuracy, attaining comparable performance to the setting using the whole public corpus for pre-training.

\noindent \textbf{Ablation studies on $p_\text{pub}(x)$.}
Our distribution matching algorithm leverages both on-device LM and LLM to sample data close to the private distribution. 
To understand how the use of LLM ($p_{\text{pub}}(x)$) impact the sampling quality, we conduct an ablation study to sample a subset of $D'$ based on top $\log p_\text{priv}(x)$ values alone instead of  $\log p_\text{priv}(x) + \log p_\text{pub}(x)$.
We use the $p_\text{priv}$-sampled $D'$ for public mid-training and report the test accuracy of three runs for both on-device LSTM and transformers given different privacy budgets in Table \ref{tab:p_pub}.
The experimental findings corroborate our theoretical analysis. Specifically, when on-device language models (LMs) are trained with high noise levels ($\varepsilon=1.77$), we find that a combined utilization of both on-device LMs and LLMs consistently yields superior performance. This is because the estimated private log-density $\hat \ell_{\text{priv}}$ is noisy to a degree comparable to the domain shift, making $\hat h$ a more reliable estimator than $\hat \ell_{\text{priv}}$.
Conversely, when on-device LMs are trained with low noise ($\varepsilon=18.71$), the performance difference between models with and without $p_{\text{pub}}$ is negligible. This indicates that the noise introduced by differentially private (DP) training is not as significant as the distribution shift, allowing $\hat \ell_{\text{priv}}$ to serve as a good estimator.

\begin{table}[]\small
    \centering
    {
        \begin{tabular}{l|lllll}
            \toprule
      $T'$                        &   $0$             &  $400$                   & $800$                &     $1200$ & $1600$    \\ \midrule
      $\varepsilon$=1.77  & $25.41$ & $27.08$   &  $\textbf{27.73}$ & $26.40$   & $18.40$           \\
      $\varepsilon$=18.71 & $28.38$ & $30.07$   &  $\textbf{30.37}$ & $29.45$  & $19.34$     \\
\bottomrule
\end{tabular}
}
\caption{\small Ablation studies on the timing ($T'$) of distribution matching for mid-point public training on on-device LSTM evaluated the StackOverflow dev set.}
\label{tab:t_prime}
\end{table}

\noindent \textbf{Ablation studies on $T'$.}
$T'$ separates two-stage private federated learning and determines the timing for distribution matching and public training. 
In this ablation study, we evaluate the dev set accuracy of on-device LSTM given different $T'$ and privacy budgets, as shown in Table \ref{tab:t_prime} and Appendix Table \ref{tab:t_prime_random_sampling}. 
From the table, we can see that the on-device LSTM achieves the best private FL accuracy given $T' = T/2 = 800$.
We think the reasons are as follows:
when $T'=0$, we cannot perform distribution matching as the on-device LM is not trained on the private dataset yet, and thus we can only use the randomly sampled data for pre-training;
when $T'=400$, the on-device LM could not be well trained on the private data distribution, thus yielding worse distribution matching quality;
when $T'=1200$ and $T'=1600$, the private on-device LM is biased towards the public data distribution due to public training, thus giving worse private FL accuracy. 
As a result, we use $T'=800$ in our main experiments, as it balances the private federated training and public training to have satisfactory distribution matching capabilities without biasing too much towards the public data distribution. 

\section{Conclusion}
In this work, we propose to improve private federated learning by using LLMs in public training. 
We leverage LLMs to aid public training of on-device LMs via distribution matching to sample public data close to private data distribution, which further improves the effectiveness and efficiency of public training, demonstrating strong private learning accuracy while minimizing the need for large amounts of public training data. 
Our work sheds light on a promising direction to improve private federated learning with public LLMs.

\subsection*{Acknowledgement}
We gratefully thank the anonymous reviewers and meta-reviewers for their constructive feedback. The authors thank the early feedback from Yanxiang Zhang. BL acknowledges support from the National Science Foundation under grant No. 2046726, No. 2229876, DARPA GARD, the National Aeronautics and Space Administration (NASA) under grant No. 80NSSC20M0229, and Alfred P. Sloan Fellowship.

\subsection*{Limitations}
This work has paved the way for enhancing the utility of differentially private on-device FL models, using large-scale public data and LLMs, but we also acknowledge the following limitations:

\begin{itemize}[leftmargin=1.3em,topsep=1pt,noitemsep]
\item \textbf{Data Distribution Matching}: The proposed distribution matching algorithm aims to sample public data close to the private data distribution. 
The choice of $\hat h$ can be data dependent and a weighted combination of $\ell_{\text{pub}}$ and $\hat\ell_{\text{priv}}$, \textit{i.e.}, $\hat h = (1-\beta)\ell_{\text{pub}}+\beta \hat\ell_{\text{priv}}$ where $\beta\in [0,1]$, as mentioned in Appendix \S\ref{app:extension}.
In practice, the optimal $\beta$ can be an important hyper-parameter to tune the distribution matching algorithm. 
Our work mainly leverages $\hat h=\frac{1}{2}\ell_{\text{pub}}+\frac{1}{2}\hat\ell_{\text{priv}}$ to analyze when and why a better estimator to the true private log-density $\ell_{\text{priv}}$ than $\ell_{\text{pub}}$ and $\hat\ell_{\text{priv}}$. We leave it as important future direction to get the optimal $\beta$ theoretically and empirically.
\item \textbf{Computational Resources}: The use of large-scale public data and LLMs can improve the privacy-utility trade-off in DP FL models, but this often comes at the cost of computational resources. Our work mainly focuses on LaMDA 2B as an example of LLM due to the lack of computational resources. While our main focus does not lie in the knowledge distillation, we leave it as future work to extend the size of LLMs in public pre-training.
\end{itemize}

\bibliography{anthology,custom}

\begin{thebibliography}{52}
\expandafter\ifx\csname natexlab\endcsname\relax\def\natexlab#1{#1}\fi

\bibitem[{Abadi et~al.(2016)Abadi, Chu, Goodfellow, McMahan, Mironov, Talwar, and Zhang}]{abadi2016deep}
Martín Abadi, Andy Chu, Ian~J. Goodfellow, H.~B. McMahan, Ilya Mironov, Kunal Talwar, and Li~Zhang. 2016.
\newblock \href {https://doi.org/10.1145/2976749.2978318} {Deep learning with differential privacy}.
\newblock \emph{CCS}.

\bibitem[{Amid et~al.(2021)Amid, Ganesh, Mathews, Ramaswamy, Song, Steinke, Suriyakumar, Thakkar, and Thakurta}]{mirrordescent}
E.~Amid, Arun Ganesh, Rajiv Mathews, Swaroop~Indra Ramaswamy, Shuang Song, T.~Steinke, V.~Suriyakumar, Om~Thakkar, and Abhradeep Thakurta. 2021.
\newblock Public data-assisted mirror descent for private model training.
\newblock \emph{International Conference On Machine Learning}.

\bibitem[{Andrew et~al.(2021)Andrew, Thakkar, McMahan, and Ramaswamy}]{andrew2019differentially}
Galen Andrew, Om~Thakkar, H~Brendan McMahan, and Swaroop Ramaswamy. 2021.
\newblock Differentially private learning with adaptive clipping.
\newblock \emph{Conference on Neural Information Processing Systems (NeurIPS)}.

\bibitem[{Authors(2019)}]{stackoverflow}
The TensorFlow~Federated Authors. 2019.
\newblock \href {https://www.tensorflow.org/federated/api_docs/python/tff/simulation/datasets/stackoverflow/load_data} {Tensorflow federated stack overflow dataset}.

\bibitem[{Bagdasaryan et~al.(2022)Bagdasaryan, Song, van Dalen, Seigel, and Áine Cahill}]{bagdasaryan2022training}
Eugene Bagdasaryan, Congzheng Song, Rogier van Dalen, Matt Seigel, and Áine Cahill. 2022.
\newblock Training a tokenizer for free with private federated learning.
\newblock \emph{arXiv preprint arXiv: Arxiv-2203.09943}.

\bibitem[{Brown et~al.(2020)Brown, Mann, Ryder, Subbiah, Kaplan, Dhariwal, Neelakantan, Shyam, Sastry, Askell et~al.}]{gpt3}
Tom Brown, Benjamin Mann, Nick Ryder, Melanie Subbiah, Jared~D Kaplan, Prafulla Dhariwal, Arvind Neelakantan, Pranav Shyam, Girish Sastry, Amanda Askell, et~al. 2020.
\newblock Language models are few-shot learners.
\newblock \emph{Advances in neural information processing systems}, 33:1877--1901.

\bibitem[{Bu et~al.(2022)Bu, Wang, Zha, and Karypis}]{bu2022differentially}
Zhiqi Bu, Yu-Xiang Wang, Sheng Zha, and George Karypis. 2022.
\newblock Differentially private bias-term only fine-tuning of foundation models.
\newblock \emph{arXiv preprint arXiv:2210.00036}.

\bibitem[{Campbell and Broderick(2019)}]{campbell2019automated}
Trevor Campbell and Tamara Broderick. 2019.
\newblock Automated scalable bayesian inference via hilbert coresets.
\newblock \emph{The Journal of Machine Learning Research}, 20(1):551--588.

\bibitem[{Charles et~al.(2022)Charles, Bonawitz, Chiknavaryan, McMahan et~al.}]{charles2022federated}
Zachary Charles, Kallista Bonawitz, Stanislav Chiknavaryan, Brendan McMahan, et~al. 2022.
\newblock Federated select: A primitive for communication-and memory-efficient federated learning.
\newblock \emph{arXiv preprint arXiv:2208.09432}.

\bibitem[{Chen et~al.(2019)Chen, Mathews, Ouyang, and Beaufays}]{chen2019federated}
Mingqing Chen, Rajiv Mathews, Tom Ouyang, and Françoise Beaufays. 2019.
\newblock Federated learning of out-of-vocabulary words.
\newblock \emph{arXiv preprint arXiv: Arxiv-1903.10635}.

\bibitem[{Devlin et~al.(2019)Devlin, Chang, Lee, and Toutanova}]{bert}
Jacob Devlin, Ming{-}Wei Chang, Kenton Lee, and Kristina Toutanova. 2019.
\newblock \href {https://doi.org/10.18653/v1/n19-1423} {{BERT:} pre-training of deep bidirectional transformers for language understanding}.
\newblock In \emph{Proceedings of the 2019 Conference of the North American Chapter of the Association for Computational Linguistics: Human Language Technologies, {NAACL-HLT} 2019, Minneapolis, MN, USA, June 2-7, 2019, Volume 1 (Long and Short Papers)}, pages 4171--4186. Association for Computational Linguistics.

\bibitem[{Dwork(2010)}]{userdp}
Cynthia Dwork. 2010.
\newblock Differential privacy in new settings.
\newblock In \emph{Proceedings of the Twenty-First Annual ACM-SIAM Symposium on Discrete Algorithms}, SODA '10, page 174–183, USA. Society for Industrial and Applied Mathematics.

\bibitem[{Dwork(2011)}]{dp2}
Cynthia Dwork. 2011.
\newblock \href {https://doi.org/10.1145/1866739.1866758} {A firm foundation for private data analysis}.
\newblock \emph{Commun. ACM}, 54(1):86–95.

\bibitem[{Dwork et~al.(2006)Dwork, McSherry, Nissim, and Smith}]{dp1}
Cynthia Dwork, Frank McSherry, Kobbi Nissim, and Adam Smith. 2006.
\newblock \href {https://doi.org/10.1007/11681878_14} {\emph{Calibrating Noise to Sensitivity in Private Data Analysis}}, pages 265--284. Springer Berlin Heidelberg.

\bibitem[{Dwork and Roth(2014)}]{dp3}
Cynthia Dwork and Aaron Roth. 2014.
\newblock \href {https://doi.org/10.1561/0400000042} {The algorithmic foundations of differential privacy}.
\newblock \emph{Found. Trends Theor. Comput. Sci.}, 9(3–4):211–407.

\bibitem[{Elbayad et~al.(2020)Elbayad, Gu, Grave, and Auli}]{elbayad2019depthadaptive}
Maha Elbayad, Jiatao Gu, Edouard Grave, and Michael Auli. 2020.
\newblock Depth-adaptive transformer.
\newblock \emph{ICLR}.

\bibitem[{Ganesh et~al.(2023)Ganesh, Haghifam, Nasr, Oh, Steinke, Thakkar, Thakurta, and Wang}]{Ganesh2023WhyIP}
Arun Ganesh, Mahdi Haghifam, Milad Nasr, Sewoong Oh, Thomas Steinke, Om~Thakkar, Abhradeep Thakurta, and Lun Wang. 2023.
\newblock Why is public pretraining necessary for private model training?
\newblock \emph{ArXiv}, abs/2302.09483.

\bibitem[{Gordon et~al.(2020)Gordon, Duh, and Andrews}]{gordon-etal-2020-compressing}
Mitchell Gordon, Kevin Duh, and Nicholas Andrews. 2020.
\newblock \href {https://doi.org/10.18653/v1/2020.repl4nlp-1.18} {Compressing {BERT}: Studying the effects of weight pruning on transfer learning}.
\newblock In \emph{Proceedings of the 5th Workshop on Representation Learning for NLP}, pages 143--155, Online. Association for Computational Linguistics.

\bibitem[{Hard et~al.(2018)Hard, Kiddon, Ramage, Beaufays, Eichner, Rao, Mathews, and Augenstein}]{gboard}
Andrew Hard, Chloé~M Kiddon, Daniel Ramage, Francoise Beaufays, Hubert Eichner, Kanishka Rao, Rajiv Mathews, and Sean Augenstein. 2018.
\newblock \href {https://arxiv.org/abs/1811.03604} {Federated learning for mobile keyboard prediction}.

\bibitem[{Hard et~al.(2020)Hard, Partridge, Nguyen, Subrahmanya, Shah, Zhu, Moreno, and Mathews}]{hard2020training}
Andrew Hard, Kurt Partridge, Cameron Nguyen, Niranjan Subrahmanya, Aishanee Shah, Pai Zhu, Ignacio~Lopez Moreno, and Rajiv Mathews. 2020.
\newblock Training keyword spotting models on non-iid data with federated learning.
\newblock \emph{arXiv preprint arXiv: Arxiv-2005.10406}.

\bibitem[{Hochreiter and Schmidhuber(1997)}]{lstm}
Sepp Hochreiter and J{\"u}rgen Schmidhuber. 1997.
\newblock Long short-term memory.
\newblock \emph{Neural computation}, 9(8):1735--1780.

\bibitem[{Jiang et~al.(2023)Jiang, Zhang, and Koyejo}]{jiang2023federated}
Enyi Jiang, Yibo~Jacky Zhang, and Oluwasanmi Koyejo. 2023.
\newblock Federated domain adaptation via gradient projection.
\newblock \emph{arXiv preprint arXiv:2302.05049}.

\bibitem[{Jiao et~al.(2019)Jiao, Yin, Shang, Jiang, Chen, Li, Wang, and Liu}]{jiao2019tinybert}
Xiaoqi Jiao, Yichun Yin, Lifeng Shang, Xin Jiang, Xiao Chen, Linlin Li, F.~Wang, and Qun Liu. 2019.
\newblock \href {https://doi.org/10.18653/v1/2020.findings-emnlp.372} {Tinybert: Distilling bert for natural language understanding}.
\newblock \emph{Findings of EMNLP}.

\bibitem[{Kairouz et~al.(2021)Kairouz, McMahan, Song, Thakkar, Thakurta, and Xu}]{dpftrl}
P.~Kairouz, B.~McMahan, Shuang Song, Om~Thakkar, Abhradeep Thakurta, and Zheng Xu. 2021.
\newblock Practical and private (deep) learning without sampling or shuffling.
\newblock \emph{International Conference On Machine Learning}.

\bibitem[{Kairouz et~al.(2019)Kairouz, McMahan, Avent, Bellet, Bennis, Bhagoji, Bonawitz, Charles, Cormode, Cummings, D'Oliveira, Rouayheb, Evans, Gardner, Garrett, Gascón, Ghazi, Gibbons, Gruteser, Harchaoui, He, He, Huo, Hutchinson, Hsu, Jaggi, Javidi, Joshi, Khodak, Konecný, Korolova, Koushanfar, Koyejo, Lepoint, Liu, Mittal, Mohri, Nock, Özgür, Pagh, Raykova, Qi, Ramage, Raskar, Song, Song, Stich, Sun, Suresh, Tramèr, Vepakomma, Wang, Xiong, Xu, Yang, Yu, Yu, and Zhao}]{kairouz2019advances}
P.~Kairouz, H.~B. McMahan, Brendan Avent, A.~Bellet, M.~Bennis, A.~Bhagoji, Keith Bonawitz, Zachary~B. Charles, Graham Cormode, Rachel Cummings, Rafael G.~L. D'Oliveira, S.~Rouayheb, David Evans, Josh Gardner, Zachary Garrett, Adrià Gascón, Badih Ghazi, Phillip~B. Gibbons, M.~Gruteser, Z.~Harchaoui, Chaoyang He, Lie He, Zhouyuan Huo, Ben Hutchinson, Justin Hsu, Martin Jaggi, T.~Javidi, Gauri Joshi, M.~Khodak, Jakub Konecný, A.~Korolova, F.~Koushanfar, O.~Koyejo, Tancrède Lepoint, Yang Liu, Prateek Mittal, M.~Mohri, R.~Nock, A.~Özgür, R.~Pagh, Mariana Raykova, Hang Qi, D.~Ramage, R.~Raskar, D.~Song, Weikang Song, S.~Stich, Ziteng Sun, A.~Suresh, Florian Tramèr, Praneeth Vepakomma, Jianyu Wang, Li~Xiong, Zheng Xu, Qiang Yang, Felix~X. Yu, Han Yu, and Sen Zhao. 2019.
\newblock \href {https://doi.org/10.1561/2200000083} {Advances and open problems in federated learning}.
\newblock \emph{Found. Trends Mach. Learn.}

\bibitem[{Kerrigan et~al.(2020)Kerrigan, Slack, and Tuyls}]{kerrigan-etal-2020-differentially}
Gavin Kerrigan, Dylan Slack, and Jens Tuyls. 2020.
\newblock \href {https://doi.org/10.18653/v1/2020.privatenlp-1.5} {Differentially private language models benefit from public pre-training}.
\newblock In \emph{Proceedings of the Second Workshop on Privacy in NLP}, pages 39--45, Online. Association for Computational Linguistics.

\bibitem[{Kudo and Richardson(2018)}]{kudo2018sentencepiece}
Taku Kudo and John Richardson. 2018.
\newblock \href {https://doi.org/10.18653/v1/D18-2012} {Sentencepiece: A simple and language independent subword tokenizer and detokenizer for neural text processing}.
\newblock \emph{Conference On Empirical Methods In Natural Language Processing}.

\bibitem[{Li et~al.(2022)Li, Zaheer, Reddi, and Smith}]{li2022private}
Tian Li, M.~Zaheer, Sashank~J. Reddi, and Virginia Smith. 2022.
\newblock Private adaptive optimization with side information.
\newblock \emph{International Conference On Machine Learning}.

\bibitem[{Li et~al.(2021)Li, Tramèr, Liang, and Hashimoto}]{dplm2}
Xuechen Li, Florian Tramèr, Percy Liang, and Tatsunori~B. Hashimoto. 2021.
\newblock Large language models can be strong differentially private learners.
\newblock \emph{International Conference On Learning Representations}.

\bibitem[{McMahan et~al.(2018)McMahan, Ramage, Talwar, and Zhang}]{mcmahan18learning}
Brendan McMahan, Daniel Ramage, Kunal Talwar, and Li~Zhang. 2018.
\newblock \href {https://openreview.net/pdf?id=BJ0hF1Z0b} {Learning differentially private recurrent language models}.
\newblock In \emph{International Conference on Learning Representations (ICLR)}.

\bibitem[{McMahan et~al.(2017)McMahan, Moore, Ramage, Hampson, and Arcas}]{fl}
H.~B. McMahan, Eider Moore, D.~Ramage, S.~Hampson, and B.~A.~Y. Arcas. 2017.
\newblock Communication-efficient learning of deep networks from decentralized data.
\newblock \emph{International Conference On Artificial Intelligence And Statistics}.

\bibitem[{Nguyen et~al.(2022)Nguyen, Wang, Malik, Sanjabi, and Rabbat}]{nguyen2022begin}
John Nguyen, Jianyu Wang, Kshitiz Malik, Maziar Sanjabi, and Michael Rabbat. 2022.
\newblock Where to begin? on the impact of pre-training and initialization in federated learning.
\newblock \emph{arXiv preprint arXiv:2210.08090}.

\bibitem[{Ponomareva et~al.(2022)Ponomareva, Bastings, and Vassilvitskii}]{ponomareva2022training}
Natalia Ponomareva, Jasmijn Bastings, and Sergei Vassilvitskii. 2022.
\newblock Training text-to-text transformers with privacy guarantees.
\newblock In \emph{Findings of the Association for Computational Linguistics: ACL 2022}, pages 2182--2193.

\bibitem[{Radford et~al.(2019)Radford, Wu, Child, Luan, Amodei, and Sutskever}]{gpt2}
Alec Radford, Jeff Wu, Rewon Child, David Luan, Dario Amodei, and Ilya Sutskever. 2019.
\newblock Language models are unsupervised multitask learners.

\bibitem[{Raffel et~al.(2020)Raffel, Shazeer, Roberts, Lee, Narang, Matena, Zhou, Li, and Liu}]{t5}
Colin Raffel, Noam Shazeer, Adam Roberts, Katherine Lee, Sharan Narang, Michael Matena, Yanqi Zhou, Wei Li, and Peter~J. Liu. 2020.
\newblock \href {http://jmlr.org/papers/v21/20-074.html} {Exploring the limits of transfer learning with a unified text-to-text transformer}.
\newblock \emph{Journal of Machine Learning Research}, 21(140):1--67.

\bibitem[{Rahimi and Recht(2007)}]{rahimi2007random}
Ali Rahimi and Benjamin Recht. 2007.
\newblock Random features for large-scale kernel machines.
\newblock \emph{Advances in neural information processing systems}, 20.

\bibitem[{Ramaswamy et~al.(2020)Ramaswamy, Thakkar, Mathews, Andrew, McMahan, and Beaufays}]{gboard2}
Swaroop Ramaswamy, Om~Thakkar, Rajiv Mathews, Galen Andrew, H.~Brendan McMahan, and Françoise Beaufays. 2020.
\newblock Training production language models without memorizing user data.
\newblock \emph{arXiv preprint arXiv: Arxiv-2009.10031}.

\bibitem[{Reddi et~al.(2021)Reddi, Charles, Zaheer, Garrett, Rush, Kone{\v{c}}n{\'y}, Kumar, and McMahan}]{reddi2021adaptive}
Sashank~J. Reddi, Zachary Charles, Manzil Zaheer, Zachary Garrett, Keith Rush, Jakub Kone{\v{c}}n{\'y}, Sanjiv Kumar, and Hugh~Brendan McMahan. 2021.
\newblock \href {https://openreview.net/forum?id=LkFG3lB13U5} {Adaptive federated optimization}.
\newblock In \emph{International Conference on Learning Representations}.

\bibitem[{Schuster and Nakajima(2012)}]{wordpiece}
Mike Schuster and Kaisuke Nakajima. 2012.
\newblock \href {https://doi.org/10.1109/ICASSP.2012.6289079} {Japanese and korean voice search}.
\newblock \emph{2012 IEEE International Conference on Acoustics, Speech and Signal Processing (ICASSP)}, pages 5149--5152.

\bibitem[{Sennrich et~al.(2016)Sennrich, Haddow, and Birch}]{bpe}
Rico Sennrich, Barry Haddow, and Alexandra Birch. 2016.
\newblock \href {https://doi.org/10.18653/v1/P16-1162} {Neural machine translation of rare words with subword units}.
\newblock In \emph{Proceedings of the 54th Annual Meeting of the Association for Computational Linguistics (Volume 1: Long Papers)}, pages 1715--1725, Berlin, Germany. Association for Computational Linguistics.

\bibitem[{Sun et~al.(2020)Sun, Yu, Song, Liu, Yang, and Zhou}]{sun2020mobilebert}
Zhiqing Sun, Hongkun Yu, Xiaodan Song, Renjie Liu, Yiming Yang, and Denny Zhou. 2020.
\newblock Mobilebert: a compact task-agnostic bert for resource-limited devices.
\newblock \emph{ACL}.

\bibitem[{Swayamdipta et~al.(2020)Swayamdipta, Schwartz, Lourie, Wang, Hajishirzi, Smith, and Choi}]{swayamdipta2020dataset}
Swabha Swayamdipta, Roy Schwartz, Nicholas Lourie, Yizhong Wang, Hannaneh Hajishirzi, Noah~A. Smith, and Yejin Choi. 2020.
\newblock \href {https://doi.org/10.18653/v1/2020.emnlp-main.746} {Dataset cartography: Mapping and diagnosing datasets with training dynamics}.
\newblock \emph{Conference On Empirical Methods In Natural Language Processing}.

\bibitem[{Thoppilan et~al.(2022)Thoppilan, Freitas, Hall, Shazeer, Kulshreshtha, Cheng, Jin, Bos, Baker, Du, Li, Lee, Zheng, Ghafouri, Menegali, Huang, Krikun, Lepikhin, Qin, Chen, Xu, Chen, Roberts, Bosma, Zhao, Zhou, Chang, Krivokon, Rusch, Pickett, Srinivasan, Man, Meier-Hellstern, Morris, Doshi, Santos, Duke, Soraker, Zevenbergen, Prabhakaran, Diaz, Hutchinson, Olson, Molina, Hoffman-John, Lee, Aroyo, Rajakumar, Butryna, Lamm, Kuzmina, Fenton, Cohen, Bernstein, Kurzweil, Aguera-Arcas, Cui, Croak, Chi, and Le}]{lamda}
Romal Thoppilan, Daniel~De Freitas, Jamie Hall, Noam Shazeer, Apoorv Kulshreshtha, Heng-Tze Cheng, Alicia Jin, Taylor Bos, Leslie Baker, Yu~Du, YaGuang Li, Hongrae Lee, Huaixiu~Steven Zheng, Amin Ghafouri, Marcelo Menegali, Yanping Huang, Maxim Krikun, Dmitry Lepikhin, James Qin, Dehao Chen, Yuanzhong Xu, Zhifeng Chen, Adam Roberts, Maarten Bosma, Vincent Zhao, Yanqi Zhou, Chung-Ching Chang, Igor Krivokon, Will Rusch, Marc Pickett, Pranesh Srinivasan, Laichee Man, Kathleen Meier-Hellstern, Meredith~Ringel Morris, Tulsee Doshi, Renelito~Delos Santos, Toju Duke, Johnny Soraker, Ben Zevenbergen, Vinodkumar Prabhakaran, Mark Diaz, Ben Hutchinson, Kristen Olson, Alejandra Molina, Erin Hoffman-John, Josh Lee, Lora Aroyo, Ravi Rajakumar, Alena Butryna, Matthew Lamm, Viktoriya Kuzmina, Joe Fenton, Aaron Cohen, Rachel Bernstein, Ray Kurzweil, Blaise Aguera-Arcas, Claire Cui, Marian Croak, Ed~Chi, and Quoc Le. 2022.
\newblock Lamda: Language models for dialog applications.
\newblock \emph{arXiv preprint arXiv: Arxiv-2201.08239}.

\bibitem[{Vaswani et~al.(2017)Vaswani, Shazeer, Parmar, Uszkoreit, Jones, Gomez, Kaiser, and Polosukhin}]{transformer}
Ashish Vaswani, Noam Shazeer, Niki Parmar, Jakob Uszkoreit, Llion Jones, Aidan~N Gomez, \L~ukasz Kaiser, and Illia Polosukhin. 2017.
\newblock \href {https://proceedings.neurips.cc/paper/2017/file/3f5ee243547dee91fbd053c1c4a845aa-Paper.pdf} {Attention is all you need}.
\newblock In \emph{Advances in Neural Information Processing Systems}, volume~30. Curran Associates, Inc.

\bibitem[{Wang et~al.(2021)Wang, Charles, Xu, Joshi, McMahan, Aguera~y Arcas, Al-Shedivat, Andrew, Avestimehr, Daly et~al.}]{wang2021fieldguide}
Jianyu Wang, Zachary Charles, Zheng Xu, Gauri Joshi, H~Brendan McMahan, Blaise Aguera~y Arcas, Maruan Al-Shedivat, Galen Andrew, Salman Avestimehr, Katharine Daly, et~al. 2021.
\newblock A field guide to federated optimization.
\newblock \emph{arXiv:2107.06917}.

\bibitem[{Wang et~al.(2020)Wang, Wei, Dong, Bao, Yang, and Zhou}]{wang2020minilm}
Wenhui Wang, Furu Wei, Li~Dong, Hangbo Bao, Nan Yang, and Ming Zhou. 2020.
\newblock Minilm: Deep self-attention distillation for task-agnostic compression of pre-trained transformers.
\newblock \emph{Advances in Neural Information Processing Systems}, 33:5776--5788.

\bibitem[{Wu et~al.(2022)Wu, Li, Charles, Xiao, Liu, Xu, and Smith}]{wu2022motley}
Shanshan Wu, Tian Li, Zachary Charles, Yu~Xiao, Ziyu Liu, Zheng Xu, and Virginia Smith. 2022.
\newblock Motley: Benchmarking heterogeneity and personalization in federated learning.
\newblock \emph{arXiv preprint arXiv: Arxiv-2206.09262}.

\bibitem[{Xu et~al.(2022{\natexlab{a}})Xu, Liu, Xu, and Shrivastava}]{xu2022adaptive}
Zhaozhuo Xu, Luyang Liu, Zheng Xu, and Anshumali Shrivastava. 2022{\natexlab{a}}.
\newblock Adaptive sparse federated learning in large output spaces via hashing.
\newblock In \emph{Workshop on Federated Learning: Recent Advances and New Challenges (in Conjunction with NeurIPS)}.

\bibitem[{Xu et~al.(2022{\natexlab{b}})Xu, Collins, Wang, Panait, Oh, Augenstein, Liu, Schroff, and McMahan}]{xu2022learning}
Zheng Xu, Maxwell Collins, Yuxiao Wang, Liviu Panait, Sewoong Oh, Sean Augenstein, Ting Liu, Florian Schroff, and H~Brendan McMahan. 2022{\natexlab{b}}.
\newblock Learning to generate image embeddings with user-level differential privacy.
\newblock \emph{arXiv preprint arXiv:2211.10844}.

\bibitem[{Xu et~al.(2023)Xu, Zhang, Andrew, Choquette, Kairouz, McMahan, Rosenstock, and Zhang}]{xu2023gboard}
Zheng Xu, Yanxiang Zhang, Galen Andrew, Christopher Choquette, Peter Kairouz, Brendan McMahan, Jesse Rosenstock, and Yuanbo Zhang. 2023.
\newblock Federated learning of gboard language models with differential privacy.

\bibitem[{Yu et~al.(2022)Yu, Naik, Backurs, Gopi, Inan, Kamath, Kulkarni, Lee, Manoel, Wutschitz, Yekhanin, and Zhang}]{dplm}
Da~Yu, Saurabh Naik, Arturs Backurs, Sivakanth Gopi, Huseyin~A. Inan, Gautam Kamath, Janardhan Kulkarni, Yin~Tat Lee, Andre Manoel, Lukas Wutschitz, Sergey Yekhanin, and Huishuai Zhang. 2022.
\newblock \href {https://openreview.net/forum?id=Q42f0dfjECO} {Differentially private fine-tuning of language models}.
\newblock In \emph{The Tenth International Conference on Learning Representations, {ICLR} 2022, Virtual Event, April 25-29, 2022}. OpenReview.net.

\bibitem[{Zhang et~al.(2021)Zhang, Khanna, Kyrillidis, and Koyejo}]{zhang2021bayesian}
Jacky Zhang, Rajiv Khanna, Anastasios Kyrillidis, and Sanmi Koyejo. 2021.
\newblock Bayesian coresets: Revisiting the nonconvex optimization perspective.
\newblock In \emph{International Conference on Artificial Intelligence and Statistics}, pages 2782--2790. PMLR.

\end{thebibliography}

\clearpage
\appendix
\onecolumn
\section{Additional Related Work}

\paragraph{Private Federated Learning in On-device NLP}
Federated learning is designed to collaboratively training NLP models without sharing sensitive user data to protect user privacy. 
Given relatively small model sizes, state-of-the-art differentially private (DP) learning algorithms \citep{mcmahan18learning,dpftrl} have enabled on-device LMs to achieve strong downstream task utility with reasonable user-level differentially privacy guarantee \citep{userdp}. 
The success of private FL has also led to real-world applications such as GBoard, which uses on-device LMs for next word prediction \citep{gboard,gboard2}.
Recent advances in DP optimization \citep{dpftrl} further improves upon the state-of-the-art DP-SGD algorithm \citep{abadi2016deep}, providing a practical tool to analyze privacy bound for federated learning. 

\paragraph{Privacy-preserving Large NLP Models} 
Scaling up LMs with more data and parameters has significantly improved performance and achieved great success in a variety of NLP tasks. 
Moreover, recent studies show that LLM has great potential in private learning. 
For example, \citet{kerrigan-etal-2020-differentially} show that public pre-training is helpful for downstream DP fine-tuning. 
Follow-up studies argue that large pre-trained LMs can be strong differentially private learners with parameter-efficient fine-tuning \citep{dplm,bu2022differentially} or full model fine-tuning \citep{dplm2}, narrowing the gap between non-private training and private training. 
\citet{Ganesh2023WhyIP} also provide theoretical groundings on the necessity of involving public training into private learning.
Motivated by the recent success of LLMs, our work performs comprehensive studies on how to use public data and existing LLMs to help private training of cross-device FL models.

\paragraph{Model Compression for Pre-trained LMs}
One promising approach to address the resource limitations of LLMs is to compress them into smaller models through various techniques such as knowledge distillation \citep{jiao2019tinybert,sun2020mobilebert,wang2020minilm}, or pruning \citep{elbayad2019depthadaptive,gordon-etal-2020-compressing}. While these techniques have demonstrated success in reducing the size of pre-trained LMs, most resulting models are still too large (with over 10 million parameters) to be effectively deployed on resource-constrained devices.
In our work, we also explore the use of knowledge distillation in public training, but with a primary focus on leveraging LLMs to improve sample efficiency in pre-training on-device LMs. We aim to improve the private FL performance of on-device LMs while minimizing the need for large amounts of training data. 
We recognize that private federated learning can further benefit from advanced model compression techniques, and we leave this as a promising and orthogonal future direction for research in this area.

\section{Experimental Setup Details}
\label{app:setup}

\subsection{Verification of Non-overlap between C4 and StackOverflow Datasets}
\label{app:verify}

StackOverflow contains 342K clients for training with 135.8M examples.
In this section, we detail the method used to verify that there is no explicit overlap between the public C4 dataset and the private StackOverflow dataset utilized in our study.

We explored C4 which has multiple variants\footnote{https://www.tensorflow.org/datasets/catalog/c4}: \texttt{c4/en, c4/realnewslike, and c4/webtextlike}. 

To verify this hypothesis, we conducted a rigorous comparison of these two datasets and its variants. 
Specifically, we compared the unique identifiers (e.g., URL for webpages in the C4 dataset, and post ID for StackOverflow posts) between the two datasets. 

No matching identifiers were found between the \texttt{c4/realnewslike} and the StackOverflow dataset.
Thus we use the \texttt{c4/realnewslike} variant as our public pretraining corpus throughout the experiment. 

Through this comprehensive comparison, we have confirmed that there is no explicit overlap between the public C4 dataset and the private StackOverflow dataset. This conclusion is critical to our study as it ensures that the integrity and privacy-preserving conditions of our experiment are maintained.

\subsection{Pretraining Details}
\label{app:pretrain}
In this section, we outline the detailed procedures followed during the pretraining phase of our experiments. The pretraining phase consisted of the following steps:

\begin{enumerate}
\item \textbf{Data Preparation:} We tokenized both the C4 and StackOverflow datasets using the SentencePiece tokenizer, as described in the main text. The vocabulary size was set to $32K$ for both datasets.

\item \textbf{Model Architecture:} We follow previous work \citep{wang2021fieldguide,mirrordescent,dpftrl,wu2022motley} and use one-layer LSTM and transformer. Both LSTM and transformer has a hidden size of 670 and embedding size of 96.

\item \textbf{Training Procedure:} We trained the model using a standard autoregressive LM loss for next token prediction. 

\item \textbf{Training Hyperparameters:} We employed the Adam optimizer with a learning rate of 1e-3, a batch size of 512, and a maximum sequence length of 20 tokens. We also used gradient clipping to prevent exploding gradients. The model was pretrained for $1400K$ steps on the C4 dataset to cover the whole C4 pretraining corpus. 
\end{enumerate}

After pretraining, the model was then fine-tuned on the downstream task using federated learning with differential privacy. Further details regarding the fine-tuning process can be found in the relevant sections of the main text. 
We show that the pretraining procedure can significantly improve the model's robust performance in the downstream task performance.

\subsection{Distillation Details}
\label{app:distill}

In this section, we delineate the specifics of our distillation process during the pretraining phase of our on-device LM.  The pretraining procedure with distillation is mostly the same as details outlined in \ref{app:pretrain} with slight hyper-parameter differences.

We set the temparature $t=1$ and top-$k=10$ to extract the logits $\boldsymbol{z}_T$ from teacher LLM. We use grid search to tune the best hyper-parameter $\beta \in \{1e-1, 1e-2, 1e-3\}$ and follow the same pre-training schedules as \S\ref{sec:pretrain} but with a smaller batch size of 128 due to memory constraints.

\section{Additional Experimental Results}
\label{app:exp}
\paragraph{Hyper-parameter Tuning for Federated Learning}
Federated learning involves numerous hyperparameters, which is crucial for our experiment. Our hyper-parameter tuning strategy follows \citet{xu2022learning}.

Throughout our experiments, we fix the number of total rounds $T=1600$. 
In each round, we select 100 clients from the shuffled pool for DP-FTRL, ensuring that the clients are disjoint across rounds.  Within each client, we fix the number of local epochs to one and set the batch size to 16. We also impose a constraint on the maximum number of samples on each client, limiting it to 256. 

We tune the server learning rate, client learning rate and clip norm for a certain given a noise multiplier. 
Specifically, we use grid search and tune the server learning rate from $\{0.05,0.1,0.2,0.5,1,2\}$, the client learning rate from $\{0.01, 0.02, 0.05, 0.1, 0.2, 0.5\}$.
We use the adaptive clipping technique in \citep{andrew2019differentially,xu2023gboard} to help determine the clip norm, which in most of our experiments falls into $\{0.1,0.3,0.4,1\}$.

\paragraph{Abaltion studies on top-$k$ logits}
We take the top-$k$ logits of the LLM to construct our distillation datasets and pre-train the on-device LMs.
Here, we conduct an ablation study by pre-training different on-device LMs with different $k$ and evaluate how top-$k$ logits in distillation can impact the accuracy of private FL.
We present our empirical results in Figure \ref{fig:topk_small} and Appendix Figure \ref{fig:topk_large}.
We observe that pre-training with a larger $k$ is more helpful to achieve better downstream accuracy on private data. 
To have a reasonable trade-off between dataset size and pre-training performance, we use top-$k=10$ in all the following experiments.

\begin{figure}[htp!]\small
    \centering
    \includegraphics[width=0.5\linewidth]{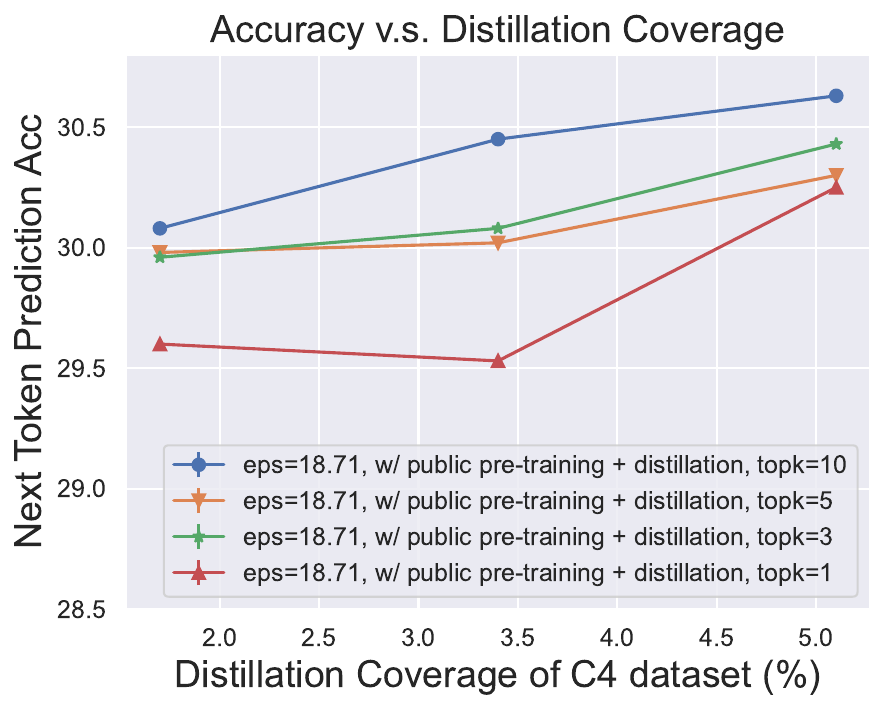}
    \caption{\small Ablation studies on how distillation steps and top-$k$ logits in distillation impact next token prediction accuracy (Acc.) of on-device LSTM models on the private StackOverflow dataset.}
      \label{fig:topk_large}
\end{figure}

\paragraph{Ablation studies on the timing $T'$ for mid-training}
$T'$ separates two-stage private federated learning and determines the timing for distribution matching and public training. 
In this ablation study, we evaluate the dev set accuracy of on-device LSTM given different $T'$ and privacy budgets, as shown in Table \ref{tab:t_prime} and Appendix Table \ref{tab:t_prime_random_sampling}. 
From the table, we can see that the on-device LSTM achieves the best private FL accuracy given $T' = T/2 = 800$.
We think the reasons are as follows:
when $T'=0$, we cannot perform distribution matching as the on-device LM is not trained on the private dataset yet, and thus we can only use the randomly sampled data for pre-training;
when $T'=400$, the on-device LM could not be well trained on the private data distribution, thus yielding worse distribution matching quality;
when $T'=1200$ and $T'=1600$, the private on-device LM is biased towards the public data distribution due to public training, thus giving worse private FL accuracy. 
As a result, we use $T'=800$ in our main experiments, as it balances the private federated training and public training to have satisfactory distribution matching capabilities without biasing too much towards the public data distribution.

\begin{table}[htp!]\small
    \centering
    {
        \begin{tabular}{l|lllll}
            \toprule
      $T'$                        &   $0$             &  $400$                   & $800$                &     $1200$    \\ \midrule
      $\varepsilon$=1.77  & $25.41$ & $26.43$   &  $\textbf{26.73}$ & $25.20$            \\
      $\varepsilon$=18.71 & $28.38$ & $29.55$   &  $\textbf{29.70}$ & $28.93$      \\
\bottomrule
\end{tabular}
}
\caption{\small Ablation studies on the timing ($T'$) of mid-point public training for on-device LSTM w/o distribution matching.}
\label{tab:t_prime_random_sampling}
\end{table}

\section{Detailed Theoretical Results}
\label{app:theory}

\subsection{Discussion on the distance metrics of log-density functions}

We need to define a meaningful distance metric in order to define the closeness of two log-density functions. 
To do this, we can choose any inner product $\langle \cdot, \cdot \rangle$ in the function space of $\gH=\{f:\gX\to \sR\}$. Note that the log-density functions  $\ell_{\text{pub}}, \ell_{\text{priv}}, \hat\ell_{\text{priv}}\in \gH$. 
Accordingly, the norm in the function space $\gH$ is denoted as $\|\cdot\|$ and by definition $\forall f\in \gH:\|f\|=\sqrt{\langle f,f\rangle}$.

We note that our analysis works for \textbf{any} choice of the inner product as long as they don't make the log-densities norm infinite. For a concrete example, we discuss a generalization of the $L^2$ inner product, i.e., the $L^\pi$ inner product where $\pi$ is a distribution on $\gX$.

Formally, for this example of $\gH=L^\pi$ we define $\langle f, g \rangle_\pi = \E_{x\sim \pi}[f(x)g(x)]$ and $\|f\|_\pi=\sqrt{\E_{x\sim \pi}[f(x)^2]}$.

The $L^\pi$ is a rather general definition that is common in the literature of Bayesian coresets~\cite{zhang2021bayesian, campbell2019automated} and kernel machine~\cite{rahimi2007random}. For example, it recovers $L^2$ if $\pi$ is chosen to be the uniform distribution on $\gX$. 

Moreover, if we choose $\pi=p_{\text{priv}}$ as the private data density, we can show that for any probability density function $p$, the  distance between $\log p$ and $\log p_{\text{priv}}$ measured by $L^{p_{\text{priv}}}$ norm upper bounds the KL divergence between $p_{\text{priv}}$ and $p$: 
\begin{align}
    \|\log p-\log p_{\text{priv}}\|^2_\pi&=\E_{x\sim p_{\text{priv}}}[(\log p(x)-\log p_{\text{priv}}(x))^2]=\E_{x\sim p_{\text{priv}}}\left(\log \frac{p(x)}{p_{\text{priv}}(x)}\right)^2\\
    &\geq \left(\E_{x\sim p_{\text{priv}}}\log \frac{p(x)}{p_{\text{priv}}(x)}\right)^2 \tag{Jensen's Inequality}\\
    &=(\texttt{KL}(p_{\text{priv}}| p))^2
\end{align}

In general, the distribution $\pi$ characterize where in $\gX$ we want to evaluate a function.   

Above we discuss a concrete choice of the inner product and the accordingly the norm to measure the distance between log-density functions. Since our analysis will work with any choice of inner product, we return to using the notation of $\langle \cdot, \cdot \rangle$  and $\|\cdot\|$ to remain generality in our main result. 

\subsection{Proof}
\begin{theorem}[Theorem~\ref{thm:main} Restated]
Let $\epsilon(\hat f)=\E[\|\hat f-\ell_{\text{priv}}\|^2]$ characterise how good $\hat f$ is as an estimator of the true private data log-density $\ell_{\text{priv}}$ for any random function $\hat f\in \gH$.
Consider the following three quantities: 
\begin{enumerate}[leftmargin=1.3em,topsep=1pt,noitemsep]
    \item $\epsilon(\ell_{\text{pub}})$ that characterizes the error if we use the public log-density function $\ell_{\text{pub}}$ to approximate the $\ell_{\text{priv}}$
    \item $\epsilon(\hat\ell_{\text{priv}})$ that characterizes the error if we use the noisy private log-density function $\hat\ell_{\text{priv}}$ to approximate the $\ell_{\text{priv}}$
    \item $\epsilon(\hat h)$ that characterizes the error if we use $\hat h=\frac{1}{2}\ell_{\text{pub}}+\frac{1}{2}\hat\ell_{\text{priv}}$ to approximate the $\ell_{\text{priv}}$.
\end{enumerate} 
Then,
\begin{align}
    \epsilon(\ell_{\text{pub}})&=d^2_{\text{pub, priv}}\\
    \epsilon(\hat\ell_{\text{priv}})&=\sigma_{\text{priv}}^2\\
    \epsilon(\hat h)&=\frac{1}{4}d^2_{\text{pub, priv}}+\frac{1}{4}\sigma_{\text{priv}}^2
\end{align}
\end{theorem}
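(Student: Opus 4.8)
The plan is to compute each of the three errors directly from the definition $\epsilon(\hat f)=\E[\|\hat f-\ell_{\text{priv}}\|^2]$, using linearity of the inner product and the two assumptions already established: the unbiasedness $\E[\hat\ell_{\text{priv}}]=\ell_{\text{priv}}$ and the defining quantities $d^2_{\text{pub, priv}}=\|\ell_{\text{pub}}-\ell_{\text{priv}}\|^2$ and $\sigma_{\text{priv}}^2=\E[\|\hat\ell_{\text{priv}}-\ell_{\text{priv}}\|^2]$. The first two identities are essentially immediate: since $\ell_{\text{pub}}$ is deterministic, $\epsilon(\ell_{\text{pub}})=\E[\|\ell_{\text{pub}}-\ell_{\text{priv}}\|^2]=\|\ell_{\text{pub}}-\ell_{\text{priv}}\|^2=d^2_{\text{pub, priv}}$, and $\epsilon(\hat\ell_{\text{priv}})=\sigma_{\text{priv}}^2$ is exactly the definition of the private domain randomness.

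The substantive step is the third identity. First I would rewrite the error argument as a sum of a deterministic term and a mean-zero random term:
\begin{equation*}
\hat h-\ell_{\text{priv}}=\tfrac{1}{2}(\ell_{\text{pub}}-\ell_{\text{priv}})+\tfrac{1}{2}(\hat\ell_{\text{priv}}-\ell_{\text{priv}}).
\end{equation*}
Then I would expand the squared norm via the inner product, obtaining three terms: $\tfrac{1}{4}\|\ell_{\text{pub}}-\ell_{\text{priv}}\|^2$, $\tfrac{1}{4}\|\hat\ell_{\text{priv}}-\ell_{\text{priv}}\|^2$, and a cross term $\tfrac{1}{2}\langle \ell_{\text{pub}}-\ell_{\text{priv}},\ \hat\ell_{\text{priv}}-\ell_{\text{priv}}\rangle$. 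Taking expectation and using linearity of both the expectation and the inner product, the cross term becomes $\tfrac{1}{2}\langle \ell_{\text{pub}}-\ell_{\text{priv}},\ \E[\hat\ell_{\text{priv}}]-\ell_{\text{priv}}\rangle$, which vanishes by unbiasedness. The two surviving terms give precisely $\tfrac{1}{4}d^2_{\text{pub, priv}}+\tfrac{1}{4}\sigma_{\text{priv}}^2$.

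The one point requiring care — and the step I expect to be the main obstacle — is justifying that expectation commutes with the inner product in the cross term, i.e. that $\E[\langle \ell_{\text{pub}}-\ell_{\text{priv}},\ \hat\ell_{\text{priv}}-\ell_{\text{priv}}\rangle]=\langle \ell_{\text{pub}}-\ell_{\text{priv}},\ \E[\hat\ell_{\text{priv}}-\ell_{\text{priv}}]\rangle$. Since $\ell_{\text{pub}}-\ell_{\text{priv}}$ is deterministic (non-random) while the randomness lives only in $\hat\ell_{\text{priv}}$, this is linearity of expectation applied through a fixed bounded linear functional $\langle \ell_{\text{pub}}-\ell_{\text{priv}},\ \cdot\ \rangle$ on $\gH$; it holds under the standing assumption that the log-densities have finite norm (so the functional is bounded and the relevant expectations are finite). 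I would state this interchange explicitly, noting it relies on the finite-norm condition imposed on the inner product in the problem setup, and then conclude. The whole argument is a bias–variance-style decomposition in a Hilbert space, so no delicate estimates are needed once the interchange is in place.
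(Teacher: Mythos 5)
Your proposal is correct and follows essentially the same route as the paper's proof: expand $\|\hat h-\ell_{\text{priv}}\|^2$ via the inner product into a bias term, a variance term, and a cross term, then kill the cross term using $\E[\hat\ell_{\text{priv}}]=\ell_{\text{priv}}$ (the paper does this for a general convex combination $\hat f_\beta=\beta\ell_{\text{pub}}+(1-\beta)\hat\ell_{\text{priv}}$ and specializes to $\beta\in\{1,\tfrac{1}{2},0\}$, whereas you treat $\beta=\tfrac{1}{2}$ directly). Your explicit justification of interchanging expectation with the inner product via a bounded linear functional is a point the paper passes over silently, but it does not change the argument.
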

\begin{proof}
We prove a general result which gives the theorem as special cases. For $\beta\in [0,1]$, define
\begin{align}
    \hat f_\beta=\beta \ell_{\text{pub}}+(1-\beta)\hat\ell_{\text{priv}}.
\end{align}

According to the definition of $\epsilon(\hat f_\beta)=\E[\|\hat f_\beta-\ell_{\text{priv}}\|^2]$, we have
    \begin{align}
        \epsilon(\hat f_\beta)&=\E[\|\hat f_\beta-\ell_{\text{priv}}\|^2]=\E[\|\beta\ell_{\text{pub}}+(1-\beta)\hat\ell_{\text{priv}}-\ell_{\text{priv}}\|^2]\\
        &=\E[\|\beta(\ell_{\text{pub}}-\ell_{\text{priv}})+(1-\beta)(\hat\ell_{\text{priv}}-\ell_{\text{priv}})\|^2]\\
        &=\beta^2 \|\ell_{\text{pub}}-\ell_{\text{priv}}\|^2 +(1-\beta)^2 \E \left[\|\hat\ell_{\text{priv}}-\ell_{\text{priv}}\|^2 \right]+2\beta(1-\beta)\E \left[\langle\ell_{\text{pub}}-\ell_{\text{priv}}, \hat\ell_{\text{priv}}-\ell_{\text{priv}} \rangle\right]\\
        &=\beta^2 d^2_{\text{pub, priv}}+(1-\beta)^2\sigma^2_{\text{priv}}+2\beta(1-\beta)\langle\ell_{\text{pub}}-\ell_{\text{priv}}, \E [\hat\ell_{\text{priv}}]-\ell_{\text{priv}} \rangle\\
        &=\beta^2 d^2_{\text{pub, priv}}+(1-\beta)^2\sigma^2_{\text{priv}}+0\\
        &=\beta^2 d^2_{\text{pub, priv}}+(1-\beta)^2\sigma^2_{\text{priv}}
    \end{align}
    Therefore, we can see that the theorem stands as we substitute $\hat f_1=\ell_{\text{pub}}$, $f_{\frac{1}{2}}=\hat h$, and $\hat f_0=\hat\ell_{\text{priv}}$.
\end{proof}

\subsection{Extended Analysis}
\label{app:extension}
Note that in the previous subsection the $\hat f_\beta$ is a weighted combination of $\ell_{\text{pub}}$ and $\hat\ell_{\text{priv}}$, \textit{i.e.}, $\hat f_\beta = (1-\beta)\ell_{\text{pub}}+\beta \hat\ell_{\text{priv}}$ where $\beta\in [0,1]$. Therefore,
one can show that with the optimal weight $\beta^\star$, it is guaranteed that $\epsilon(\hat f_{\beta^\star})\leq \min\{\epsilon(\ell_{\text{pub}}), \epsilon(\hat\ell_{\text{priv}})\}$.

This framework of analysis is general (as it stands with any meaningful inner product and its norm), and it may inspire even better ways to design estimators mitigating the domain shift and private model noise.

\end{document}